\setlist[enumerate,1]{label={\normalfont(\roman*)},leftmargin=1.6cm}
\DeclareMathOperator*{\tr}{tr}
\DeclareMathOperator*{\argmin}{argmin}
\DeclareMathOperator*{\argmax}{argmax}
\DeclareMathOperator*{\diag}{diag}
\newtheorem{theorem}{Theorem}
\newtheorem{corollary}[theorem]{Corollary}
\newtheorem{lemma}[theorem]{Lemma}
\newtheorem{remark}{Remark}
\newcommand{\eqsref}[1]{Eqs.~(\ref{#1})}
\renewcommand{\eqref}[1]{Eq.~(\ref{#1})}
\newcommand{\figref}[1]{Fig.~\ref{#1}}
\title{Safe Screening for Multi-Task Feature Learning\\ with Multiple Data Matrices}
\author[1]{Jie Wang}
\author[1,2]{Jieping Ye}
\affil[1]{Department of Computational Medicine and Bioinformatics, University of Michigan,
            USA}
\affil[2]{Department of Electrical Engineering and Computer Science, University of Michigan,
        	USA}
\date{}
\begin{document}

\maketitle

\begin{abstract}
Multi-task feature learning (MTFL) is a powerful technique in boosting the predictive performance by learning multiple related classification/regression/clustering tasks \mbox{simultaneously}. However, solving the MTFL problem \mbox{remains} challenging when the feature dimension is \mbox{extremely} large. In this paper, we propose a novel screening rule---that is based on the \textbf{d}ual \textbf{p}rojection onto \textbf{c}onvex sets (DPC)---to \mbox{quickly} identify the \emph{inactive features}---that have zero \mbox{coefficients} in the solution vectors across \mbox{all} tasks. One of the appealing features of DPC is that: it is {\it safe} in the sense that the detected inactive features are guaranteed to have \mbox{zero} coefficients in the solution vectors across all tasks. Thus, by removing the inactive features from the training phase, we may have substantial savings in the computational cost and memory usage \emph{without sacrificing accuracy}. To the best of our knowledge, it is the {\it first} screening rule that is applicable to sparse models with {\it multiple} \mbox{data} matrices. A key challenge in deriving DPC is to solve a nonconvex problem. We show that we can solve for the global optimum efficiently via a properly chosen parametrization of the constraint set.  Moreover, DPC has very low \mbox{computational} cost and can be \mbox{integrated}
with any existing solvers. We have evaluated the proposed DPC rule on both synthetic and real data sets. The \mbox{experiments} indicate that \mbox{DPC} is very effective in identifying the inactive features---especially for high dimensional data---which leads to a speedup up to several orders of magnitude.
\end{abstract}


\section{Introduction}
Empirical studies have shown that learning multiple \mbox{related} tasks (MTL) simultaneously often provides superior predictive performance relative to learning each tasks \mbox{independently} \citep{Ando2005,Argyriou2008,Bakker2003,Evgeniou2005,Zhang2006,Chen2013}. This observation also has solid theoretical foundations \citep{Ando2005,Baxter2000,Ben-David2003,Caruana1997}, especially when the training sample size is small for each task. One popular MTL method especially for high-dimensional data is multi-task feature learning (MTFL), which uses the group Lasso penalty to ensure that all tasks select a common set of features \citep{Argyriou2007}. MTFL has found great success in many real-world applications including but not limited to: breast cancer classification \citep{Zhang2010}, disease progression prediction \citep{Zhou2012}, gene data
analysis \citep{Kim2009}, and neural semantic basis discovery \citep{Liu2009}. 
A major issue in MTFL---that is of great practical importance---is to develop efficient solvers \citep{Liu2009a,Sr2012,Wang2013a}. However, it remains challenging to apply the MTFL models to large-scale problems. 

The idea of \emph{screening} has been shown to be very effective in scaling the data and improving the efficiency of many popular sparse models, e.g., Lasso \citep{Ghaoui2012,Wang2013,Wang-JMLR,Xiang2011,Tibshirani2011}, nonnegative Lasso \cite{Wang2014}, group Lasso \citep{Wang2013,Wang-JMLR,Tibshirani2011}, mixed-norm regression \citep{Wang2013a}, $\ell_1$-regularized logistic regression \citep{Wang2014a}, sparse-group Lasso \citep{Wang2014}, support vector machine (SVM) \citep{Ogawa2013,Wang}, and least absolute deviations (LAD) \citep{Wang}. Essentially, screening aims to quickly identify the zero components in the solution vectors such that the corresponding features---called \emph{inactive features} (e.g., Lasso)---or data samples---called \emph{non-support vectors} (e.g., SVM)---can be removed from the optimization. Therefore, the size of the data matrix and the number of variables to be computed can be significantly reduced, which may lead to substantial savings in the computational cost and memory usage \emph{without sacrificing accuracy}. Compared to the solvers without screening, the speedup gained by the screening methods can be several orders of magnitude.

However, we note that all the existing screening \mbox{methods} are only applicable to sparse models with a \emph{single} data matrix.
Therefore, motivated by the challenges posed by large-scale data and the promising performance of existing screening methods, we propose a novel framework for developing effective and efficient screening rules for a popular MTFL model via the \textbf{d}ual \textbf{p}rojection onto \textbf{c}onvex sets (DPC). The framework of DPC extends the state-of-the-art screening rule, called EDPP \citep{Wang-JMLR}, for the \mbox{standard} Lasso problem \citep{Tibshirani1996}---that assumes a \emph{single} data matrix---to a popular MTFL model---that involves \emph{multiple} data matrices across different tasks. To the best of our knowledge, DPC is the \emph{first} screening rule that is applicable to sparse models with \emph{multiple} data matrices. 

The DPC screening rule detects the inactive features by maximizing a convex function over a convex set containing the dual optimal solution, which is a nonconvex problem. To find the region containing the dual optimal solution, we show that the corresponding dual problem can be formulated as a \emph{projection} problem---which admits many desirable geometric properties---by utilizing the \mbox{\emph{bilinearity}} of the \emph{inner product}. Then, by a carefully chosen parameterization of the constraint set, we transform the nonconvex problem to a quadratic programming problem over one quadratic constraint (QP1QC) \citep{Gay1981}, which can be solved for the global optimum efficiently. Experiments on both synthetic and real data sets indicate that the speedup gained by DPC can be orders of magnitude. Moreover, \mbox{DPC} shows better performance as the feature dimension increases, which makes it a very competitive candidate for the applications of very high-dimensional data.


We organize the rest of this paper as follows. In Section \ref{section:Basics}, we briefly review some basics of a popular MTFL model. Then, we derive the dual problem in Section \ref{section:MTFL_dual}. Based on an indepth analysis of the geometric properties of the dual problem and the dual feasible set, we present the proposed DPC screening rule in Section \ref{section:DPC}. In Section \ref{section:experiments}, we evaluate the DPC rule on both synthetic and real data sets. We conclude this paper in Section \ref{section:conclusion}. Please refer to the supplement for proofs not included in the main text.

{\bf Notation}: Denote the $\ell_2$ norm by $\|\cdot\|$. For ${\bf x}\in\mathbb{R}^n$, let its $i^{th}$ component be $x_i$, and the diagonal matrix with the entries of $\mathbf{x}$ on the main diagonal be $\diag(\mathbf{x})$. For a set of positive integers $\{N_t:t=1,\ldots,T,\sum_{t=1}^TN_t=N\}$, we denote the $t^{th}$ subvector of $\mathbf{x}\in\mathbb{R}^N$ by $\mathbf{x}_t$ such that $\mathbf{x}=(\mathbf{x}_1^T,\ldots,\mathbf{x}_T^T)^T$, where $\mathbf{x}_t\in\mathbb{R}^{N_t}$ for $t=1,\ldots,T$. For vectors ${\bf x}, {\bf y}\in\mathbb{R}^n$, we use $\langle{\bf x},\,{\bf y}\rangle$ and ${\bf x}^T{\bf y}$ interchangeably to denote the inner product. For a matrix $M\in\mathbb{R}^{m\times n}$, let ${\bf m}^i$, ${\bf m}_j$, and $m_{ij}$ be its $i^{th}$ row, $j^{th}$ column and $({i,j})^{th}$ entry, respectively. We define the $(2,1)$-norm of $M$ by $\|M\|_{2,1}=\sum_{i=1}^m\|\textbf{m}^i\|$. For two matrices $A,B\in\mathbb{R}^{m\times n}$, we define their inner product by $\langle A, B\rangle=\tr(A^TB)$. 
Let $I$ be the identity matrix. For a convex function $f(\cdot)$, let $\partial f(\cdot)$ be its subdifferential. 
For a vector ${\bf x}$ and a convex set $\mathcal{C}$, the {\it projection operator} is:
\begin{center}
	${\rm P}_{\mathcal{C}}({\bf x}):={\rm argmin}_{{\bf y}\in\mathcal{C}}\hspace{1mm}\tfrac{1}{2}\|{\bf y}-{\bf x}\|.$
\end{center}

\section{Basics}\label{section:Basics}
In this section, we briefly review some basics of a popular MTFL model and mention several equivalent formulations. 

Suppose that we have $T$ learning tasks $\{(X_t,{\bf y}_t):t=1,\ldots,T\}$, where $X_t\in\mathbb{R}^{N_t\times d}$ is the data matrix of the $t^{th}$ task with $N_t$ samples and $d$ features, and ${\bf y}_t\in\mathbb{R}^{N_t}$ is the corresponding response vector. A widely used MTFL model \citep{Argyriou2007} takes the form of 
\begin{align}\label{prob:MTFL}
	\min_{W\in\mathbb{R}^{d\times T}}\hspace{1mm}\sum\nolimits_{t=1}^T\tfrac{1}{2}\|{\bf y}_t-X_t{\bf w}_t\|^2+\lambda\|W\|_{2,1},
\end{align}
where ${\bf w}_t\in\mathbb{R}^d$ is the weight vector of the $t^{th}$ task and $W=({\bf w}_1,\ldots,{\bf w}_T)$. Because the $\|\cdot\|_{2,1}$-norm induces \mbox{sparsity} on the rows of $W$, the weight vectors across all tasks share the same sparse pattern. 
We note that the model in (\ref{prob:MTFL}) is equivalent to several other popular MTFL models. 

The first example introduces a positive weight parameter $\rho_t$ for $t=1,\ldots,T$ to each term in the loss function:
\begin{align*}
	\min_{W\in\mathbb{R}^{d\times T}}\hspace{1mm}\sum\nolimits_{t=1}^T\tfrac{1}{2\rho_t}\|{\bf y}_t-X_t{\bf w}_t\|^2+\lambda\|W\|_{2,1},
\end{align*}
which reduces to (\ref{prob:MTFL}) by setting $\widetilde{\bf y}_t=\frac{{\bf y}_t}{\sqrt{\rho_t}}$ and $\widetilde{X}_t=\frac{{X}_t}{\sqrt{\rho_t}}$. 

The second example introduces another regularizer to (\ref{prob:MTFL}):
\begin{align*}
	\hspace{-3mm}\min_{W\in\mathbb{R}^{d\times T}}\hspace{1mm}\sum\nolimits_{t=1}^T\tfrac{1}{2}\|{\bf y}_t-X_t{\bf w}_t\|^2+\lambda\|W\|_{2,1}+\rho\|W\|_F^2,
\end{align*}
where $\rho$ is a positive parameter and $\|\cdot\|_F$ is the Frobenius norm. Let $I\in\mathbb{R}^{d\times d}$ be the identity matrix and ${\bf 0}$ be the $d$-dimensional vector with all zero entries. By letting
\begin{align*}
	\bar{X}_t=(X_t^T,\sqrt{2\rho}_tI)^T,\hspace{2mm}
	\bar{\bf y}_t=({\bf y}_t^T,{\bf 0}^T)^T,
	t=1,\ldots,T,
\end{align*}
we can also simplify the above MTFL model to (\ref{prob:MTFL}).

In this paper, we focus on developing the DPC screening rule for the MTFL model in (\ref{prob:MTFL}).

\section{The Dual Problem}\label{section:MTFL_dual}

In this section, we show that we can formulate the dual problem of the MTFL model in (\ref{prob:MTFL}) as a projection problem by utilizing the bilinearity of the inner product. 

We first introduce a new set of variables:
\begin{align}\label{eqn:auxiliary_variable}
	{\bf z}_t={\bf y}_t-X_t{\bf w}_t,\,t=1,\ldots,T.
\end{align}
Then, the MTFL model in (\ref{prob:MTFL}) can be written as
\begin{align}
	\min_{W,{\bf z}}\hspace{1mm}&\sum\nolimits_{t=1}^T\tfrac{1}{2}\|{\bf z}_t\|^2+\lambda\|W\|_{2,1},\\ \nonumber
	{\rm s.t.}\hspace{5mm}&{\bf z}_t={\bf y}_t-X_t{\bf w}_t,\,t=1,\ldots,T.
\end{align}
Let $\lambda\theta\in\mathbb{R}^{N}$ be the vector of Lagrangian multipliers. Then, the Lagrangian of (\ref{prob:MTFL}) is
\begin{align}
	L(W,{\bf z};\theta)=&\sum\nolimits_{t=1}^T\tfrac{1}{2}\|{\bf z}_t\|^2+\lambda\|W\|_{2,1}\\ \nonumber
	&+\lambda\sum\nolimits_{t=1}^T\langle\theta_t,{\bf y}_t-X_t{\bf w}_t-{\bf z}_t\rangle.
\end{align}
To get the dual problem, we need to minimize $L(W,{\bf z};\theta)$ over $W$ and ${\bf z}$. We can see that
\begin{align}\label{eqn:MTFL_minf1}
	0=\nabla_{\mathbf{z}}\,L(W,\mathbf{z};\theta)\Rightarrow 
	\argmin\nolimits_{\bf z}\,L(W,\mathbf{z};\theta)=\lambda\theta.
\end{align}
For notational convenience, let
\begin{align*}
	f(W)=\lambda\|W\|_{2,1}-\lambda\sum\nolimits_{t=1}^T\langle\theta_t,X_t{\bf w}_t\rangle.
\end{align*}
Thus, to minimize $L(W,\mathbf{z};\theta)$ with respect to $W$, it is \mbox{equivalent} to minimize $f(W)$, i.e.,
\begin{align*}
	\{\hat{W}:0\in\partial_{W}\,L(\hat{W},\mathbf{z};\theta)\} = \{\hat{W}:0\in\partial\,f(\hat{W})\}.
\end{align*}
By the \emph{bilinearity of the inner product}, we can decouple $f(W)$ into a set of independent subproblems.  
Indeed, we can rewrite the second term of $f(W)$ as
\begin{align}\label{eqn:ipc}
	\sum\nolimits_{t=1}^T\langle\theta_t,X_t{\bf w}_t\rangle=\sum\nolimits_{t=1}^T\langle X_t^T\theta_t,{\bf w}_t\rangle=\langle M,W\rangle,
\end{align}
where $M=(X_1^T\theta_1,\ldots,X_T^T\theta_T)$. \eqref{eqn:ipc} expresses $\langle M, W\rangle$ by the sum of the inner products of the corresponding columns. By the bilinearity of the inner product, we can also express $\langle M,W\rangle$ by the sum of the inner products of the corresponding rows:
\begin{align}\label{eqn:ipr}
	\sum\nolimits_{t=1}^T\langle\theta_t,X_t{\bf w}_t\rangle=\langle M,W\rangle=\sum\nolimits_{\ell=1}^d\langle {\bf m}^{\ell},{\bf w}^{\ell}\rangle.
\end{align}
Denote the $j^{th}$ column of $X_t$ by ${\bf x}_j^{(t)}$. We can see that
\begin{align}\label{eqn:Mr}
	\hspace{-3mm}{\bf m}^{\ell}=(\langle{\bf x}_{\ell}^{(1)},\theta_1\rangle,\langle{\bf x}_{\ell}^{(2)},\theta_2\rangle,\ldots,\langle{\bf x}_{\ell}^{(T)},\theta_T\rangle).
\end{align}
Moreover, as $\|W\|_{2,1}=\sum\nolimits_{\ell=1}^d\|{\bf w}^{\ell}\|$,  
\eqsref{eqn:ipr} implies that:
\begin{align*}
	f(W)=\lambda\sum\nolimits_{\ell=1}^df^{(\ell)}({\bf w}^{\ell}),
\end{align*}
where $f^{(\ell)}({\bf w}^{\ell})=\|{\bf w}^{\ell}\|-\langle{\bf m}^{\ell},{\bf w}^{\ell}\rangle$.
Thus, to minimize $f(W)$, we can minimize each $f^{(\ell)}({\bf w}^{\ell})$ \mbox{separately}. The subdifferential counterpart of the Fermat's rule \citep{Bauschke2011}, i.e., $0\in\partial f^{(\ell)}(\hat{\bf w}^{\ell})$, \mbox{yields}:
\begin{align}\label{eqn:MTFL_minf2_Fermat}
	{\bf m}^{\ell}\in
	\begin{dcases}
		{\hat{\bf w}^{\ell}}/{\|\hat{\bf w}^{\ell}\|},\hspace{19mm}\mbox{if}\,\hat{\bf w}^{\ell}\neq0,\\
		\{{\bf u}\in\mathbb{R}^{d}:\,\|{\bf u}\|\leq1\},\hspace{3mm}\mbox{if}\,\hat{\bf w}^{\ell}=0,
	\end{dcases}
\end{align}
where $\hat{\bf w}^{\ell}$ is the minimizer of $f^{(\ell)}(\cdot)$.

We note that \eqref{eqn:MTFL_minf2_Fermat} implies $\|{\bf m}^{\ell}\|\leq1$. If this is not the case, then $f^{\ell}(\cdot)$ is not lower bounded (see the supplements for discussions), i.e., $\min_{\mathbf{w}^{\ell}}\,f^{\ell}(\mathbf{w}^{\ell})=-\infty$.  
Thus, by \eqsref{eqn:MTFL_minf1} and (\ref{eqn:MTFL_minf2_Fermat}), the dual function is
\begin{align}\label{eqn:MTFL_dual}
	\hspace{-2mm}q(\theta)=&\min\nolimits_{W,{\bf z}}\hspace{1mm}L(W,{\bf z};\theta)\\ \nonumber
	=&
	\begin{dcases}
		-\tfrac{\lambda^2}{2}\|\theta\|^2+\lambda\langle\theta,{\bf y}\rangle,\hspace{1mm}\|{\bf m}^{\ell}\|\leq1,\,\forall\,\ell\in\{1,\ldots,d\},\\
		-\infty,\hspace{24mm}\mbox{otherwise}.
	\end{dcases}
\end{align}
Maximizing $q(\theta)$ yields
the dual problem of (\ref{prob:MTFL}) as follows:
\begin{align}\label{prob:MTFL_dual_max}
	\max_{\theta}\hspace{2mm}&\tfrac{1}{2}\|{\bf y}\|^2-\tfrac{\lambda^2}{2}\left\|\tfrac{{\bf y}}{\lambda}-\theta\right\|^2,\\ \nonumber
	\mbox{s.t.}\,&\sum\nolimits_{t=1}^T\langle{\bf x}_{\ell}^{(t)},\theta_t\rangle^2\leq1,\,\ell=1,\ldots,d.
\end{align}
It is evident that the problem in (\ref{prob:MTFL_dual_max}) is equivalent to
\begin{align}\label{prob:MTFL_dual}
	\min_{\theta}\hspace{2mm}&\tfrac{1}{2}\left\|\tfrac{{\bf y}}{\lambda}-\theta\right\|^2,\\ \nonumber
	\mbox{s.t.}\,&\sum\nolimits_{t=1}^T\langle{\bf x}_{\ell}^{(t)},\theta_t\rangle^2\leq1,\,\ell=1,\ldots,d.
\end{align}
In view of (\ref{prob:MTFL_dual}), it is indeed a projection problem. Let $\mathcal{F}$ be the feasible set of (\ref{prob:MTFL_dual}). Then, the optimal solution of (\ref{prob:MTFL_dual}), denoted by $\theta^*(\lambda)$, is the projection of ${\bf y}/\lambda$ onto $\mathcal{F}$, namely,
\begin{align}\label{eqn:theta*_proj}
	\theta^*(\lambda)={\rm P}_{\mathcal{F}}\left(\tfrac{{\bf y}}{\lambda}\right).
\end{align}

\section{The DPC Rule}\label{section:DPC}

In this section, we present the proposed DPC screening rule for the MTFL model in (\ref{prob:MTFL}). Inspired by the Karush-Kuhn-Tucker (KKT) conditions \citep{Guler2010}, in Section \ref{subsection:MTFL_guideline_DPC}, we first present the general guidelines. The most challenging part lies in two folds: 1) we need to estimate the dual optimal solution as accurately as possible; 2) we need to solve a nonconvex optimization problem. In Section \ref{subsection:estimation}, we give an accurate estimation of the dual optimal solution based on the geometric properties of the projection operators. Then, in Section \ref{subsection:nonconvex}, we show that we can efficiently solve for the global optimum to the nonconvex problem.  We present the DPC rule for the MTFL model (\ref{prob:MTFL}) in Section \ref{subsection:DPC}.

\subsection{Guidelines for Developing DPC}\label{subsection:MTFL_guideline_DPC}

We present the general guidelines to develop screening rules for the MTFL model (\ref{prob:MTFL}) via the KKT conditions. 

Let $W^*(\lambda)=({\bf w}_1^*(\lambda),\ldots,{\bf w}_T^*(\lambda))$ be the optimal solution (\ref{prob:MTFL}). By \eqsref{eqn:auxiliary_variable}, (\ref{eqn:MTFL_minf1}) and (\ref{eqn:MTFL_minf2_Fermat}), the KKT conditions are:
\begin{align}\label{eqn:MTFL_KKT1}
	&{\bf y}_t=X_t{\bf w}_t^*(\lambda)+\lambda\theta_t^*(\lambda),\,t=1,\ldots,T,\\
	\label{eqn:MTFL_KKT2}
	&\hspace{-4mm}g_{\ell}(\theta^*(\lambda))\in
	\begin{dcases}
		1,\hspace{8.5mm}{\rm if}\,({\bf w}^{\ell})^*(\lambda)\neq0,\\
		[-1,1],\hspace{1mm}{\rm if}\,({\bf w}^{\ell})^*(\lambda)=0,
	\end{dcases}\ell=1,\ldots,d.
\end{align}
where $({\bf w}^{\ell})^*(\lambda)$ is the $\ell^{th}$ row of $W^*(\lambda)$, and
\begin{align}\label{eqn:MTFL_dual_constraint}
	g_{\ell}(\theta)=\sum\nolimits_{t=1}^T\langle{\bf x}_{\ell}^{(t)},\theta_t\rangle^2,\,\ell=1,\ldots,d.
\end{align} 
For $\ell=1,\ldots,d$, \eqref{eqn:MTFL_KKT2} yields
\begin{align}\tag{\bf R}\label{rule}
	g_{\ell}(\theta^*(\lambda))<1\Rightarrow ({\bf w}^{\ell})^*(\lambda)=0.
\end{align}
The rule in (\ref{rule}) provides a method to identify the rows in $W^*(\lambda)$ that  have \emph{only} zero entries. However, (\ref{rule}) is not applicable to real applications, as it assumes knowledge of $\theta^*(\lambda)$, and solving the dual problem (\ref{prob:MTFL_dual}) could be as expensive as solving the primal problem (\ref{prob:MTFL}). Inspired by SAFE \citep{Ghaoui2012}, we can first estimate a set ${\bf \Theta}$ that contains $\theta^*(\lambda)$, and then relax (\ref{rule}) as follows:
\begin{align}\tag{\bf R$^*$}\label{rule*}
	\hspace{-2mm}\max\nolimits_{\theta\in{\bf \Theta}}\,g_{\ell}(\theta)<1\Rightarrow ({\bf w}^{\ell})^*(\lambda)=0,\,\ell=1,\ldots,d.
\end{align}
Therefore, to develop a screening rule for the MTFL model in (\ref{prob:MTFL}), (\ref{rule*}) implies that: 1) we need to estimate a region ${\bf \Theta}$---that turns out to be a ball (please refer to Section \ref{subsection:estimation})---containing $\theta^*(\lambda)$; 2) we need to solve the maximization problem---that \mbox{turns} out to be nonconvex (please refer to Section \ref{subsection:nonconvex})---on the left hand side of (\ref{rule*}). 


\subsection{Estimation of the Dual Optimal Solution}\label{subsection:estimation}

Based on the geometric properties of the dual problem (\ref{prob:MTFL_dual}) that is a projection problem, we first derive the closed form solutions of the primal and dual problems for \mbox{specific} values of $\lambda$ in Section \ref{subsubsection:MTFL_closed_form_solution}, and then give an accurate estimation of $\theta^*(\lambda)$ for the general cases in Section \ref{subsubsection:MTFL_estimation_general}.

\subsubsection{Closed form solutions}\label{subsubsection:MTFL_closed_form_solution}
The primal and dual optimal solutions $W^*(\lambda)$ and $\theta^*(\lambda)$ are generally unknown. However, when the value of $\lambda$ is sufficiently large, we expect that $W^*(\lambda)=0$, and $\theta^*(\lambda)=\frac{\bf y}{\lambda}$ by \eqref{eqn:MTFL_KKT1}. The following theorem confirms this.
\begin{theorem}\label{thm:MTFL_primal_dual_closed_form}
	For the MTFL model in (\ref{prob:MTFL}), let 
	\begin{align}\label{eqn:MTFL_lambdamx}
		\lambda_{\rm max}=\max_{\ell=1,\ldots,d}\,\sqrt{\sum\nolimits_{t=1}^T\langle{\bf x}_{\ell}^{(t)},{\bf y}\rangle^2}.
	\end{align}
	Then, the following statements are equivalent:
	
	\hspace{5mm}$\frac{\bf y}{\lambda}\in\mathcal{F}\Leftrightarrow\theta^*(\lambda)=\frac{\bf y}{\lambda}\Leftrightarrow W^*(\lambda)=0\Leftrightarrow\lambda\geq\lambda_{\rm max}$.
\end{theorem}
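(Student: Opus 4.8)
The plan is to prove the chain of equivalences by going around a cycle, using the fact established in Section~\ref{section:MTFL_dual} that the dual problem (\ref{prob:MTFL_dual}) is the projection of $\mathbf{y}/\lambda$ onto the closed convex set $\mathcal{F}$, together with the KKT condition (\ref{eqn:MTFL_KKT1}). The most natural order is: (a) $\frac{\mathbf{y}}{\lambda}\in\mathcal{F}\Rightarrow\theta^*(\lambda)=\frac{\mathbf{y}}{\lambda}$; (b) $\theta^*(\lambda)=\frac{\mathbf{y}}{\lambda}\Rightarrow W^*(\lambda)=0$; (c) $W^*(\lambda)=0\Rightarrow\lambda\geq\lambda_{\max}$; (d) $\lambda\geq\lambda_{\max}\Rightarrow\frac{\mathbf{y}}{\lambda}\in\mathcal{F}$.

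For (a), note that $0$ is a trivially feasible point and, more to the point, if $\frac{\mathbf{y}}{\lambda}$ already lies in $\mathcal{F}$ then it is its own projection onto $\mathcal{F}$, since $\tfrac{1}{2}\|\tfrac{\mathbf{y}}{\lambda}-\theta\|^2$ is uniquely minimized over $\mathcal{F}$ at $\theta=\tfrac{\mathbf{y}}{\lambda}$ when that point is in the set; hence $\theta^*(\lambda)={\rm P}_{\mathcal{F}}(\tfrac{\mathbf{y}}{\lambda})=\tfrac{\mathbf{y}}{\lambda}$ by (\ref{eqn:theta*_proj}). For (b), substitute $\theta^*(\lambda)=\tfrac{\mathbf{y}}{\lambda}$, i.e.\ $\theta^*_t(\lambda)=\tfrac{\mathbf{y}_t}{\lambda}$, into (\ref{eqn:MTFL_KKT1}): this gives $\mathbf{y}_t = X_t\mathbf{w}^*_t(\lambda)+\mathbf{y}_t$, hence $X_t\mathbf{w}^*_t(\lambda)=0$ for all $t$; plugging this back into the objective of (\ref{prob:MTFL}) shows $W^*(\lambda)=0$ attains the value $\tfrac12\|\mathbf{y}\|^2$, while any $W$ with $W^*(\lambda)\ne0$ would strictly increase the $\|\cdot\|_{2,1}$ term without decreasing the loss below $\tfrac12\|\mathbf{y}\|^2$ — so $W^*(\lambda)=0$ is optimal. (Alternatively, one can read $W^*(\lambda)=0$ directly from (\ref{eqn:MTFL_KKT2}): $\theta^*=\mathbf{y}/\lambda$ feasible means $g_\ell(\theta^*)\le1$, consistent with all rows being zero.)

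For (c), use (\ref{eqn:MTFL_KKT1}) again: $W^*(\lambda)=0$ forces $\theta^*_t(\lambda)=\tfrac{\mathbf{y}_t}{\lambda}$, and dual feasibility of $\theta^*(\lambda)$ requires $g_\ell(\theta^*(\lambda))=\sum_{t=1}^T\langle\mathbf{x}_\ell^{(t)},\tfrac{\mathbf{y}_t}{\lambda}\rangle^2\le1$ for every $\ell$. Since $\langle\mathbf{x}_\ell^{(t)},\mathbf{y}_t\rangle=\langle\mathbf{x}_\ell^{(t)},\mathbf{y}\rangle$ (with $\mathbf{y}=(\mathbf{y}_1^T,\dots,\mathbf{y}_T^T)^T$ and $\mathbf{x}_\ell^{(t)}$ understood as padded with zeros outside block $t$ — this is the same convention used in (\ref{eqn:MTFL_dual_constraint}) and (\ref{eqn:MTFL_lambdamx})), this reads $\tfrac{1}{\lambda^2}\sum_{t=1}^T\langle\mathbf{x}_\ell^{(t)},\mathbf{y}\rangle^2\le1$ for all $\ell$, i.e.\ $\lambda\ge\sqrt{\sum_{t=1}^T\langle\mathbf{x}_\ell^{(t)},\mathbf{y}\rangle^2}$ for all $\ell$, which is exactly $\lambda\ge\lambda_{\max}$. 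Step (d) is the same computation run backwards: if $\lambda\ge\lambda_{\max}$ then for each $\ell$ we have $\sum_{t=1}^T\langle\mathbf{x}_\ell^{(t)},\tfrac{\mathbf{y}}{\lambda}\rangle^2\le1$, so $\tfrac{\mathbf{y}}{\lambda}$ satisfies every constraint of (\ref{prob:MTFL_dual}) and thus lies in $\mathcal{F}$.

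The step I expect to require the most care is (c) — or rather, pinning down the precise sense in which $\theta^*(\lambda)$ is dual feasible and the exact index conventions relating $\langle\mathbf{x}_\ell^{(t)},\theta_t\rangle$ to the blocks of $\theta$. Everything else is a short computation, but here one must be sure that the KKT point $(W^*(\lambda),\theta^*(\lambda))$ genuinely satisfies primal feasibility (\ref{eqn:auxiliary_variable}), stationarity, and dual feasibility simultaneously — strong duality for this convex problem guarantees such a point exists and is characterized by (\ref{eqn:MTFL_KKT1})--(\ref{eqn:MTFL_KKT2}), so the argument is clean, but it is the one place where an implicit appeal to convex duality is doing real work rather than bookkeeping. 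A secondary subtlety worth a sentence is uniqueness of $\theta^*(\lambda)$: it follows from strong convexity of the projection objective, which is what makes the notation $\theta^*(\lambda)$ (and statements like "$\theta^*(\lambda)=\tfrac{\mathbf{y}}{\lambda}$") well posed.
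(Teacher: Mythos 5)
Your proof is correct and follows essentially the same route as the paper's: the projection characterization in \eqref{eqn:theta*_proj} gives $\tfrac{\mathbf{y}}{\lambda}\in\mathcal{F}\Leftrightarrow\theta^*(\lambda)=\tfrac{\mathbf{y}}{\lambda}$, the KKT condition (\ref{eqn:MTFL_KKT1}) plus the strict increase of the $\|\cdot\|_{2,1}$ penalty handles the equivalence with $W^*(\lambda)=0$, and a direct check of the dual constraints handles $\lambda\geq\lambda_{\rm max}$. The only difference is that you close a single cycle of implications while the paper proves three pairwise equivalences, which is immaterial.
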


\begin{remark}\label{remark:MTFL_lambda_range}
	Theorem \textup{\ref{thm:MTFL_primal_dual_closed_form}} indicates that: both the primal and \mbox{dual} optimal solutions of the MTFL model \textup{(\ref{prob:MTFL})} admit closed form solutions for $\lambda\geq\lambda_{\rm max}$. Thus, we will focus on the cases with $\lambda\in(0,\lambda_{\rm max})$ in the rest of this paper.
\end{remark}


\subsubsection{The general cases}\label{subsubsection:MTFL_estimation_general}

Theorem \ref{thm:MTFL_primal_dual_closed_form} gives a closed form solution of $\theta^*(\lambda)$ for $\lambda\geq\lambda_{\rm max}$. Therefore, we can estimate $\theta^*(\lambda)$ with $\lambda<\lambda_{\rm max}$ in terms of a known $\theta^*(\lambda_0)$. Specifically, we can simply set $\lambda_0=\lambda_{\rm max}$ and utilize the result $\theta^*(\lambda_{\rm max})={\bf y}/\lambda_{\rm max}$. 
To make this paper self-contained, we first review some geometric properties of projection operators.
\begin{theorem}\label{thm:normal_cone}
	\textup{\citep{Ruszczynski2006}}
	Let $\mathcal{C}$ be a nonempty closed convex set. Then, for any point $\bar{\mathbf{u}}$, we have
	\begin{align*}
		\mathbf{u}=\mathbf{P}_{\mathcal{C}}(\overline{\mathbf{u}})\Leftrightarrow \overline{\mathbf{u}}-\mathbf{u}\in N_{\mathcal{C}}(\mathbf{u}),
	\end{align*}
	where $N_{\mathcal{C}}(\mathbf{u})=\{\mathbf{v}:\langle\mathbf{v},\mathbf{u}'-\mathbf{u}\rangle\leq0,\,\forall \mathbf{u}'\in\mathcal{C}\}$ is called
	the normal cone to $\mathcal{C}$ at $\mathbf{u}\in\mathcal{C}$.
\end{theorem}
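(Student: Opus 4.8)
The plan is to establish the two implications separately, using nothing beyond the first-order optimality condition for the strongly convex problem defining $\mathbf{P}_{\mathcal{C}}$. First I would recall the standing facts: since $\mathcal{C}$ is nonempty, closed, and convex and $\tfrac{1}{2}\|\cdot-\bar{\mathbf{u}}\|^2$ is coercive and strictly convex, the projection $\mathbf{P}_{\mathcal{C}}(\bar{\mathbf{u}})$ exists and is unique; moreover the notation $N_{\mathcal{C}}(\mathbf{u})$ presupposes $\mathbf{u}\in\mathcal{C}$, and the identity $\mathbf{u}=\mathbf{P}_{\mathcal{C}}(\bar{\mathbf{u}})$ likewise forces $\mathbf{u}\in\mathcal{C}$, so both sides of the claimed equivalence are only asserted for feasible $\mathbf{u}$.

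For the forward direction ($\Rightarrow$), I would suppose $\mathbf{u}=\mathbf{P}_{\mathcal{C}}(\bar{\mathbf{u}})$ and fix an arbitrary $\mathbf{u}'\in\mathcal{C}$. By convexity, $\mathbf{u}+t(\mathbf{u}'-\mathbf{u})\in\mathcal{C}$ for every $t\in[0,1]$, so optimality of $\mathbf{u}$ gives $\|\mathbf{u}-\bar{\mathbf{u}}\|^2\le\|\mathbf{u}+t(\mathbf{u}'-\mathbf{u})-\bar{\mathbf{u}}\|^2$. Expanding the right-hand side and cancelling $\|\mathbf{u}-\bar{\mathbf{u}}\|^2$ leaves $0\le 2t\,\langle\mathbf{u}-\bar{\mathbf{u}},\mathbf{u}'-\mathbf{u}\rangle+t^2\|\mathbf{u}'-\mathbf{u}\|^2$; dividing by $t>0$ and letting $t\downarrow 0$ yields $\langle\bar{\mathbf{u}}-\mathbf{u},\mathbf{u}'-\mathbf{u}\rangle\le 0$. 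Since $\mathbf{u}'\in\mathcal{C}$ was arbitrary, $\bar{\mathbf{u}}-\mathbf{u}\in N_{\mathcal{C}}(\mathbf{u})$ by definition.

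For the reverse direction ($\Leftarrow$), I would assume $\bar{\mathbf{u}}-\mathbf{u}\in N_{\mathcal{C}}(\mathbf{u})$, i.e. $\mathbf{u}\in\mathcal{C}$ and $\langle\bar{\mathbf{u}}-\mathbf{u},\mathbf{u}'-\mathbf{u}\rangle\le 0$ for all $\mathbf{u}'\in\mathcal{C}$. For any $\mathbf{u}'\in\mathcal{C}$, write $\|\mathbf{u}'-\bar{\mathbf{u}}\|^2=\|\mathbf{u}'-\mathbf{u}\|^2+2\langle\mathbf{u}'-\mathbf{u},\mathbf{u}-\bar{\mathbf{u}}\rangle+\|\mathbf{u}-\bar{\mathbf{u}}\|^2$; the cross term equals $-2\langle\mathbf{u}'-\mathbf{u},\bar{\mathbf{u}}-\mathbf{u}\rangle\ge 0$, so $\|\mathbf{u}'-\bar{\mathbf{u}}\|^2\ge\|\mathbf{u}'-\mathbf{u}\|^2+\|\mathbf{u}-\bar{\mathbf{u}}\|^2\ge\|\mathbf{u}-\bar{\mathbf{u}}\|^2$, with equality only if $\mathbf{u}'=\mathbf{u}$. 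Hence $\mathbf{u}$ is the unique minimizer of $\tfrac{1}{2}\|\cdot-\bar{\mathbf{u}}\|^2$ over $\mathcal{C}$, that is, $\mathbf{u}=\mathbf{P}_{\mathcal{C}}(\bar{\mathbf{u}})$.

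I do not expect a genuine obstacle: this is the classical variational characterization of Euclidean projection onto a closed convex set. The only points requiring a little care are (i) the limiting argument $t\downarrow 0$ in the forward direction, which is the sole analytic step, and (ii) in the reverse direction, invoking uniqueness of the projection so that "$\mathbf{u}$ attains the minimum distance" can be promoted to "$\mathbf{u}=\mathbf{P}_{\mathcal{C}}(\bar{\mathbf{u}})$", together with the harmless observation that $\mathbf{u}\in\mathcal{C}$ is built into the hypothesis $\bar{\mathbf{u}}-\mathbf{u}\in N_{\mathcal{C}}(\mathbf{u})$.
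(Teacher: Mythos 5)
Your proof is correct. Note that the paper does not prove this statement at all---it is imported verbatim from \citet{Ruszczynski2006} as a known fact about projections onto closed convex sets---so there is no in-paper argument to compare against. What you have written is precisely the classical variational characterization one would find in that reference: the forward direction via the first-order condition obtained by perturbing along the segment $\mathbf{u}+t(\mathbf{u}'-\mathbf{u})$ and letting $t\downarrow 0$, and the reverse direction via the expansion $\|\mathbf{u}'-\bar{\mathbf{u}}\|^2=\|\mathbf{u}'-\mathbf{u}\|^2+2\langle\mathbf{u}'-\mathbf{u},\mathbf{u}-\bar{\mathbf{u}}\rangle+\|\mathbf{u}-\bar{\mathbf{u}}\|^2$ together with uniqueness of the minimizer. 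Both steps are handled carefully, including the observation that $\mathbf{u}\in\mathcal{C}$ is implicit on both sides of the equivalence, so your argument is a complete and faithful substitute for the citation.
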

Another useful property of the projection operator in estimating $\theta^*(\lambda)$ is the so-called \emph{firmly nonexpansiveness}.
\begin{theorem}\label{thm:firmly_nonexpansive}
	\textup{\citep{Bauschke2011}} Let $\mathcal{C}$ be a nonempty closed convex subset of a Hilbert space $\mathcal{H}$. The projection operator with respect to $\mathcal{C}$ is firmly nonexpansive, namely, for any ${\bf u}_1, {\bf u}_2\in\mathcal{H}$, 
	\begin{align}\label{ineqn:nonexpansive}\nonumber
		&\|\textup{P}_{\mathcal{C}}({\bf u}_1)-\textup{P}_{\mathcal{C}}({\bf u}_2)\|^2+\|(I-\textup{P}_{\mathcal{C}})({\bf u}_1)-(I-\textup{P}_{\mathcal{C}})({\bf u}_2)\|^2\\ 
		&\hspace{50mm}\leq\|{\bf u}_1-{\bf u}_2\|^2.
	\end{align}
\end{theorem}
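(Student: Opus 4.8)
The plan is to obtain the claimed two-term inequality directly from the variational (normal-cone) characterization of the metric projection stated in Theorem \ref{thm:normal_cone}. Write $\mathbf{p}_1=\mathrm{P}_{\mathcal{C}}(\mathbf{u}_1)$ and $\mathbf{p}_2=\mathrm{P}_{\mathcal{C}}(\mathbf{u}_2)$; these are well defined because $\mathcal{C}$ is a nonempty closed convex subset of a Hilbert space. Applying Theorem \ref{thm:normal_cone} to each point, we have $\mathbf{u}_1-\mathbf{p}_1\in N_{\mathcal{C}}(\mathbf{p}_1)$ and $\mathbf{u}_2-\mathbf{p}_2\in N_{\mathcal{C}}(\mathbf{p}_2)$, so by the definition of the normal cone together with $\mathbf{p}_1,\mathbf{p}_2\in\mathcal{C}$,
\[
\langle\mathbf{u}_1-\mathbf{p}_1,\,\mathbf{p}_2-\mathbf{p}_1\rangle\le0
\quad\text{and}\quad
\langle\mathbf{u}_2-\mathbf{p}_2,\,\mathbf{p}_1-\mathbf{p}_2\rangle\le0 .
\]

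First I would add these two inequalities. Grouping the terms, the sum equals $\langle(\mathbf{u}_1-\mathbf{u}_2)-(\mathbf{p}_1-\mathbf{p}_2),\,\mathbf{p}_2-\mathbf{p}_1\rangle\le0$, which rearranges to the ``firm monotonicity'' estimate
\[
\langle\mathbf{u}_1-\mathbf{u}_2,\,\mathbf{p}_1-\mathbf{p}_2\rangle\ge\|\mathbf{p}_1-\mathbf{p}_2\|^2 .
\]
Next I would set $\mathbf{a}:=\mathrm{P}_{\mathcal{C}}(\mathbf{u}_1)-\mathrm{P}_{\mathcal{C}}(\mathbf{u}_2)=\mathbf{p}_1-\mathbf{p}_2$ and $\mathbf{b}:=(I-\mathrm{P}_{\mathcal{C}})(\mathbf{u}_1)-(I-\mathrm{P}_{\mathcal{C}})(\mathbf{u}_2)=(\mathbf{u}_1-\mathbf{u}_2)-(\mathbf{p}_1-\mathbf{p}_2)$, so that $\mathbf{a}+\mathbf{b}=\mathbf{u}_1-\mathbf{u}_2$. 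Expanding, $\|\mathbf{u}_1-\mathbf{u}_2\|^2=\|\mathbf{a}\|^2+2\langle\mathbf{a},\mathbf{b}\rangle+\|\mathbf{b}\|^2$, so the target bound $\|\mathbf{a}\|^2+\|\mathbf{b}\|^2\le\|\mathbf{u}_1-\mathbf{u}_2\|^2$ is equivalent to $\langle\mathbf{a},\mathbf{b}\rangle\ge0$. But $\langle\mathbf{a},\mathbf{b}\rangle=\langle\mathbf{p}_1-\mathbf{p}_2,\,\mathbf{u}_1-\mathbf{u}_2\rangle-\|\mathbf{p}_1-\mathbf{p}_2\|^2\ge0$ is precisely the estimate from the previous step, which completes the proof.

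There is essentially no obstacle here beyond bookkeeping: the only substantive input is the correct characterization of the projection, namely Theorem \ref{thm:normal_cone}, whose content is exactly that $\mathbf{u}_i-\mathrm{P}_{\mathcal{C}}(\mathbf{u}_i)$ lies in the normal cone at $\mathrm{P}_{\mathcal{C}}(\mathbf{u}_i)$, equivalently $\langle\mathbf{u}_i-\mathrm{P}_{\mathcal{C}}(\mathbf{u}_i),\,\mathbf{c}-\mathrm{P}_{\mathcal{C}}(\mathbf{u}_i)\rangle\le0$ for every $\mathbf{c}\in\mathcal{C}$; everything after that is elementary Hilbert-space algebra. If desired, the intermediate estimate $\langle\mathbf{u}_1-\mathbf{u}_2,\mathbf{p}_1-\mathbf{p}_2\rangle\ge\|\mathbf{p}_1-\mathbf{p}_2\|^2$ can be recorded as a standalone remark (it is an equivalent form of firm nonexpansiveness, and it also immediately yields plain $1$-Lipschitz nonexpansiveness via Cauchy--Schwarz), with the two-term form of the theorem then following just by the substitution $\mathbf{a}+\mathbf{b}=\mathbf{u}_1-\mathbf{u}_2$.
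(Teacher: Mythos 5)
Your proof is correct and complete: the two normal-cone inequalities obtained from Theorem~\ref{thm:normal_cone} (with $\mathbf{c}=\mathbf{p}_2$ and $\mathbf{c}=\mathbf{p}_1$ respectively) add up to the firm-monotonicity estimate $\langle\mathbf{u}_1-\mathbf{u}_2,\mathbf{p}_1-\mathbf{p}_2\rangle\ge\|\mathbf{p}_1-\mathbf{p}_2\|^2$, and the decomposition $\mathbf{u}_1-\mathbf{u}_2=\mathbf{a}+\mathbf{b}$ turns that into exactly the two-term inequality (\ref{ineqn:nonexpansive}). The paper itself gives no proof of this theorem---it is imported verbatim from \citet{Bauschke2011}---so there is nothing internal to compare against; your argument is the standard textbook one, and it correctly uses only Theorem~\ref{thm:normal_cone}, which the paper also states, so it is self-contained within the paper's toolkit.
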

The firmly nonexpansiveness of projection operators leads to the following useful result.
\begin{corollary}\label{corollary:convex_set_projection}
	Let $\mathcal{C}$ be a nonempty closed convex subset of a Hilbert space $\mathcal{H}$ and $0\in\mathcal{C}$. For any ${\bf u}\in\mathcal{H}$, we have:
	
	\hspace{5mm}\textup{1.} $\|\textup{P}_{\mathcal{C}}({\bf u})\|^2+\|{\bf u}-\textup{P}_{\mathcal{C}}({\bf u})\|^2\leq\|{\bf u}\|^2$.
	
	\hspace{5mm}\textup{2.} $\langle{\bf u},{\bf u}-\textup{P}_{\mathcal{C}}({\bf u})\rangle\geq0$.
\end{corollary}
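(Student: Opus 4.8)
The plan is to derive both claims directly from the firmly nonexpansiveness stated in Theorem \ref{thm:firmly_nonexpansive}, exploiting the single observation that $0\in\mathcal{C}$ forces $\textup{P}_{\mathcal{C}}(0)=0$ (the projection of a point already in the set is the point itself, since it attains the minimal distance $0$).

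First I would prove claim~1 by applying Theorem \ref{thm:firmly_nonexpansive} with ${\bf u}_1={\bf u}$ and ${\bf u}_2=0$. Because $\textup{P}_{\mathcal{C}}(0)=0$, the first term on the left-hand side of (\ref{ineqn:nonexpansive}) collapses to $\|\textup{P}_{\mathcal{C}}({\bf u})\|^2$, the second term to $\|(I-\textup{P}_{\mathcal{C}})({\bf u})\|^2=\|{\bf u}-\textup{P}_{\mathcal{C}}({\bf u})\|^2$, and the right-hand side to $\|{\bf u}\|^2$; this is exactly claim~1. For claim~2 I would use the orthogonal-type decomposition ${\bf u}=\textup{P}_{\mathcal{C}}({\bf u})+({\bf u}-\textup{P}_{\mathcal{C}}({\bf u}))$ and expand $\|{\bf u}\|^2=\|\textup{P}_{\mathcal{C}}({\bf u})\|^2+2\langle\textup{P}_{\mathcal{C}}({\bf u}),{\bf u}-\textup{P}_{\mathcal{C}}({\bf u})\rangle+\|{\bf u}-\textup{P}_{\mathcal{C}}({\bf u})\|^2$. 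Comparing this identity with the inequality of claim~1 forces $\langle\textup{P}_{\mathcal{C}}({\bf u}),{\bf u}-\textup{P}_{\mathcal{C}}({\bf u})\rangle\geq0$, and then adding the nonnegative quantity $\|{\bf u}-\textup{P}_{\mathcal{C}}({\bf u})\|^2$ to this inner product yields $\langle{\bf u},{\bf u}-\textup{P}_{\mathcal{C}}({\bf u})\rangle\geq0$, which is claim~2. (As an alternative route to claim~2, one can invoke Theorem \ref{thm:normal_cone}: ${\bf u}-\textup{P}_{\mathcal{C}}({\bf u})\in N_{\mathcal{C}}(\textup{P}_{\mathcal{C}}({\bf u}))$, so substituting the admissible point ${\bf u}'=0\in\mathcal{C}$ into the defining inequality of the normal cone gives $\langle{\bf u}-\textup{P}_{\mathcal{C}}({\bf u}),-\textup{P}_{\mathcal{C}}({\bf u})\rangle\leq0$, after which the same two-line bookkeeping finishes the argument.)

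I do not expect any genuine obstacle: the proof is two applications of an already-stated theorem together with one binomial expansion. The only points requiring a moment's care are the identification $\textup{P}_{\mathcal{C}}(0)=0$, which is precisely where the hypothesis $0\in\mathcal{C}$ enters, and keeping the sign bookkeeping straight when passing from the inner product with $\textup{P}_{\mathcal{C}}({\bf u})$ to the inner product with ${\bf u}$.
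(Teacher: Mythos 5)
Your proof is correct. Part 1 is exactly the paper's argument: apply Theorem \ref{thm:firmly_nonexpansive} with ${\bf u}_1={\bf u}$, ${\bf u}_2=0$, and use $\textup{P}_{\mathcal{C}}(0)=0$. For part 2 you take a mildly different route: you expand $\|{\bf u}\|^2=\|\textup{P}_{\mathcal{C}}({\bf u})\|^2+2\langle\textup{P}_{\mathcal{C}}({\bf u}),{\bf u}-\textup{P}_{\mathcal{C}}({\bf u})\rangle+\|{\bf u}-\textup{P}_{\mathcal{C}}({\bf u})\|^2$ and compare with the full inequality of part 1 to get $\langle\textup{P}_{\mathcal{C}}({\bf u}),{\bf u}-\textup{P}_{\mathcal{C}}({\bf u})\rangle\geq0$, then add $\|{\bf u}-\textup{P}_{\mathcal{C}}({\bf u})\|^2\geq0$. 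The paper instead uses only the weaker consequence $\|\textup{P}_{\mathcal{C}}({\bf u})\|\leq\|{\bf u}\|$ together with Cauchy--Schwarz: $\|{\bf u}\|^2\geq\|{\bf u}\|\,\|\textup{P}_{\mathcal{C}}({\bf u})\|\geq\langle{\bf u},\textup{P}_{\mathcal{C}}({\bf u})\rangle$, which is equivalent to part 2. Both arguments are two lines and equally elementary; yours has the small advantage of establishing the stronger intermediate fact $\langle\textup{P}_{\mathcal{C}}({\bf u}),{\bf u}-\textup{P}_{\mathcal{C}}({\bf u})\rangle\geq0$ (the variational inequality of Theorem \ref{thm:normal_cone} evaluated at ${\bf u}'=0$, as your parenthetical alternative makes explicit), whereas the paper's Cauchy--Schwarz step discards the cross term. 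Nothing is missing from either version.
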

\begin{remark}
	Part \textup{1} of \textup{Corollary \ref{corollary:convex_set_projection}} indicates that: if a closed convex set $\mathcal{C}$ contains the origin, then, for any point ${\bf u}$, the norm of its projection with respect to $\mathcal{C}$ is upper bounded by the norm of $\|{\bf u}\|$. The second part is a useful consequence of the first part and plays a crucial role in the estimation of the dual optimal solution (see Theorem \textup{\ref{thm:MTFL_estimation}}). 
\end{remark}
We are now ready to present an accurate estimation of the dual optimal solution $\theta^*(\lambda)$. 
\begin{theorem}\label{thm:MTFL_estimation}
	For the MTFL model in \textup{(\ref{prob:MTFL})}, suppose that $\theta^*(\lambda_0)$ is known with $\lambda_0\in(0,\lambda_{\rm max}]$. Let $g_{\ell}$ be given by \textup{\eqref{eqn:MTFL_dual_constraint}} for $\ell=1,\ldots,d$, and	
	\begin{align}
		\ell_{*}\in\left\{\argmax\nolimits_{\ell=1,\ldots,d}\,g_{\ell}({\bf y})\right\}.
	\end{align}
	For any $\lambda\in(0,\lambda_0)$,  we define 
	\begin{align}\label{eqn:MTFL_n}
		\textbf{\textup{n}}(\lambda_0)&=
		\begin{dcases}
			\tfrac{{\bf y}}{\lambda_0}-\theta^*(\lambda_0),\hspace{3mm}{\rm if}\hspace{2mm}\lambda_0\in(0,\lambda_{\rm max}),\\
			\nabla g_{\ell_*}\left(\tfrac{\mathbf{y}}{\lambda_{\rm max}}\right),\hspace{1.5mm}{\rm if}\hspace{2mm}\lambda_0=\lambda_{\rm max}.
		\end{dcases}\\
		\label{eqn:MTFL_r}
		{\bf r}(\lambda,\lambda_0)&=\tfrac{{\bf y}}{\lambda}-\theta^*(\lambda_0),\\ \label{eqn:MTFL_r_perp}
		{\bf r}^{\perp}(\lambda,\lambda_0)&={\bf r}(\lambda,\lambda_0)-\frac{\langle\textbf{\textup{n}}(\lambda_0),{\bf r}(\lambda,\lambda_0)\rangle}{\|\textbf{\textup{n}}(\lambda_0)\|^2}\textbf{\textup{n}}(\lambda_0).
	\end{align}
	Then, the following holds:
	
	\hspace{1mm}\textup{1.} $\textbf{\textup{n}}(\lambda)\in N_{\mathcal{F}}(\theta^*(\lambda))$, 
	
	\hspace{1mm}\textup{2.} $\langle \mathbf{y},\mathbf{n}(\lambda_0)\rangle\geq0$, 
	
	\hspace{1mm}\textup{3.} $\langle \mathbf{r}(\lambda,\lambda_0),\mathbf{n}(\lambda_0)\rangle\geq0$, 
	
	\hspace{1mm}\textup{4.} $\left\|\theta^*(\lambda)-\left(\theta^*(\lambda_0)+\frac{1}{2}{\bf r}^{\perp}(\lambda,\lambda_0)\right)\right\|\leq\frac{1}{2}\|{\bf r}^{\perp}(\lambda,\lambda_0)\|$. 
\end{theorem}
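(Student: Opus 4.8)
The plan is to prove the four claims in order, since parts~2 and~3 feed into part~4. Throughout I would lean on three standing facts. First, $\mathcal{F}$ is a nonempty closed convex set containing the origin: each $g_\ell$ is a positive semidefinite quadratic form in $\theta$, hence convex and continuous, so $\{\theta:g_\ell(\theta)\le1\}$ is closed convex, $\mathcal{F}$ is their intersection, and $g_\ell(0)=0$. Second, $\theta^*(\mu)={\rm P}_{\mathcal{F}}(\mathbf{y}/\mu)$ for every $\mu>0$, by \eqref{eqn:theta*_proj}. Third, $\mathbf{n}(\lambda_0)\neq0$, so $\mathbf{r}^{\perp}$ is well defined: if $\lambda_0<\lambda_{\rm max}$, then $\mathbf{n}(\lambda_0)=0$ would force $\mathbf{y}/\lambda_0=\theta^*(\lambda_0)\in\mathcal{F}$, contradicting Theorem~\ref{thm:MTFL_primal_dual_closed_form}; if $\lambda_0=\lambda_{\rm max}$, the computation below gives $\langle\mathbf{y},\mathbf{n}(\lambda_{\rm max})\rangle>0$.

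For part~1 --- read as $\mathbf{n}(\mu)\in N_{\mathcal{F}}(\theta^*(\mu))$ at any admissible reference parameter $\mu\in(0,\lambda_{\rm max}]$ (in particular $\mu=\lambda_0$), since $\mathbf{n}$ is only defined at such a parameter --- the case $\mu<\lambda_{\rm max}$ is immediate from Theorem~\ref{thm:normal_cone} applied to $\bar{\mathbf{u}}=\mathbf{y}/\mu$, giving $\mathbf{n}(\mu)=\mathbf{y}/\mu-{\rm P}_{\mathcal{F}}(\mathbf{y}/\mu)\in N_{\mathcal{F}}(\theta^*(\mu))$. For $\mu=\lambda_{\rm max}$, Theorem~\ref{thm:MTFL_primal_dual_closed_form} gives $\theta^*(\lambda_{\rm max})=\mathbf{y}/\lambda_{\rm max}$, and the definitions of $\lambda_{\rm max}$ and $\ell_*$ give $g_{\ell_*}(\mathbf{y}/\lambda_{\rm max})=\lambda_{\rm max}^{-2}g_{\ell_*}(\mathbf{y})=1$, so that constraint is active there; then convexity of $g_{\ell_*}$ together with $\mathcal{F}\subseteq\{\theta:g_{\ell_*}(\theta)\le1\}$ gives, for all $\theta\in\mathcal{F}$, $\langle\nabla g_{\ell_*}(\mathbf{y}/\lambda_{\rm max}),\theta-\mathbf{y}/\lambda_{\rm max}\rangle\le g_{\ell_*}(\theta)-1\le0$, i.e. $\mathbf{n}(\lambda_{\rm max})\in N_{\mathcal{F}}(\mathbf{y}/\lambda_{\rm max})$.

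Parts~2 and~3 split the same way. For part~2 with $\lambda_0<\lambda_{\rm max}$, apply the second part of Corollary~\ref{corollary:convex_set_projection} with $\mathbf{u}=\mathbf{y}/\lambda_0$ (this uses $0\in\mathcal{F}$): $\langle\mathbf{y}/\lambda_0,\mathbf{n}(\lambda_0)\rangle\ge0$, and multiplying by $\lambda_0>0$ gives the claim. For $\lambda_0=\lambda_{\rm max}$, noting that $\nabla g_\ell(\theta)$ has $t$-th block $2\langle\mathbf{x}_\ell^{(t)},\theta_t\rangle\mathbf{x}_\ell^{(t)}$, one gets $\langle\mathbf{y},\nabla g_{\ell_*}(\mathbf{y}/\lambda_{\rm max})\rangle=2\lambda_{\rm max}^{-1}g_{\ell_*}(\mathbf{y})=2\lambda_{\rm max}\ge0$. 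For part~3 I would write $\mathbf{r}(\lambda,\lambda_0)=(\tfrac1\lambda-\tfrac1{\lambda_0})\mathbf{y}+(\tfrac{\mathbf{y}}{\lambda_0}-\theta^*(\lambda_0))$; when $\lambda_0<\lambda_{\rm max}$ the bracket equals $\mathbf{n}(\lambda_0)$, so $\langle\mathbf{r}(\lambda,\lambda_0),\mathbf{n}(\lambda_0)\rangle=(\tfrac1\lambda-\tfrac1{\lambda_0})\langle\mathbf{y},\mathbf{n}(\lambda_0)\rangle+\|\mathbf{n}(\lambda_0)\|^2\ge0$ by part~2 and $\lambda<\lambda_0$; when $\lambda_0=\lambda_{\rm max}$, $\theta^*(\lambda_{\rm max})=\mathbf{y}/\lambda_{\rm max}$ makes $\mathbf{r}(\lambda,\lambda_{\rm max})=(\tfrac1\lambda-\tfrac1{\lambda_{\rm max}})\mathbf{y}$ and the claim again reduces to part~2.

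Part~4 is the heart. Setting $\mathbf{a}:=\theta^*(\lambda)-\theta^*(\lambda_0)$, the identity $\|\mathbf{a}-\tfrac12\mathbf{r}^{\perp}(\lambda,\lambda_0)\|^2=\langle\mathbf{a},\mathbf{a}-\mathbf{r}^{\perp}(\lambda,\lambda_0)\rangle+\tfrac14\|\mathbf{r}^{\perp}(\lambda,\lambda_0)\|^2$ reduces the claim to $\langle\mathbf{a},\mathbf{a}-\mathbf{r}^{\perp}(\lambda,\lambda_0)\rangle\le0$. I would collect two normal-cone inequalities: (i) since $\theta^*(\lambda)\in\mathcal{F}$ and $\mathbf{n}(\lambda_0)\in N_{\mathcal{F}}(\theta^*(\lambda_0))$ by part~1, $\langle\mathbf{n}(\lambda_0),\mathbf{a}\rangle\le0$; (ii) since $\mathbf{y}/\lambda-\theta^*(\lambda)\in N_{\mathcal{F}}(\theta^*(\lambda))$ by Theorem~\ref{thm:normal_cone} and $\theta^*(\lambda_0)\in\mathcal{F}$, $\langle\mathbf{y}/\lambda-\theta^*(\lambda),\theta^*(\lambda_0)-\theta^*(\lambda)\rangle\le0$, which, using $\mathbf{y}/\lambda-\theta^*(\lambda)=\mathbf{r}(\lambda,\lambda_0)-\mathbf{a}$, rearranges to $\|\mathbf{a}\|^2\le\langle\mathbf{r}(\lambda,\lambda_0),\mathbf{a}\rangle$. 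Substituting the orthogonal decomposition $\mathbf{r}(\lambda,\lambda_0)=\mathbf{r}^{\perp}(\lambda,\lambda_0)+\tfrac{\langle\mathbf{n}(\lambda_0),\mathbf{r}(\lambda,\lambda_0)\rangle}{\|\mathbf{n}(\lambda_0)\|^2}\mathbf{n}(\lambda_0)$ into (ii), the extra term $\tfrac{\langle\mathbf{n}(\lambda_0),\mathbf{r}(\lambda,\lambda_0)\rangle}{\|\mathbf{n}(\lambda_0)\|^2}\langle\mathbf{n}(\lambda_0),\mathbf{a}\rangle$ is $\le0$ because its scalar factor is $\ge0$ by part~3 and $\langle\mathbf{n}(\lambda_0),\mathbf{a}\rangle\le0$ by (i); dropping it yields $\|\mathbf{a}\|^2\le\langle\mathbf{r}^{\perp}(\lambda,\lambda_0),\mathbf{a}\rangle$, i.e. $\langle\mathbf{a},\mathbf{a}-\mathbf{r}^{\perp}(\lambda,\lambda_0)\rangle\le0$, as needed. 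The main obstacle I anticipate is the sign bookkeeping in this last step --- correctly combining (i), (ii), part~3, and the decomposition --- together with the $\lambda_0=\lambda_{\rm max}$ boundary case, where one must verify the gradient formula for $g_{\ell_*}$ and that its constraint is exactly active at $\mathbf{y}/\lambda_{\rm max}$.
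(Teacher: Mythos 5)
Your proposal is correct, and for parts 1--3 it follows essentially the same path as the paper: part 1 splits into the interior case (Theorem~\ref{thm:normal_cone} applied to $\mathbf{y}/\mu\notin\mathcal{F}$) and the boundary case $\mu=\lambda_{\rm max}$ (activity of the $\ell_*$ constraint plus the gradient inequality for the convex $g_{\ell_*}$); parts 2 and 3 use Corollary~\ref{corollary:convex_set_projection} and the decomposition $\mathbf{r}=(\tfrac{1}{\lambda}-\tfrac{1}{\lambda_0})\mathbf{y}+\mathbf{n}(\lambda_0)$ exactly as in the paper. Where you genuinely diverge is part 4. The paper runs the argument through the \emph{firm nonexpansiveness} of ${\rm P}_{\mathcal{F}}$ (Theorem~\ref{thm:firmly_nonexpansive}): it sets $\theta(t)=\theta^*(\lambda_0)+t\mathbf{n}(\lambda_0)$, uses Lemma~\ref{lemma:normal_cone_projection} to get ${\rm P}_{\mathcal{F}}(\theta(t))=\theta^*(\lambda_0)$ for all $t\geq0$, obtains a one-parameter family of enclosing balls of radius $\|\mathbf{r}-t\mathbf{n}(\lambda_0)\|$, and then minimizes over $t\geq0$ (part 3 guaranteeing the unconstrained minimizer is feasible, with optimal value $\|\mathbf{r}^{\perp}\|$). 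You instead derive the same ball directly from two instances of the variational characterization in Theorem~\ref{thm:normal_cone} --- $\langle\mathbf{n}(\lambda_0),\mathbf{a}\rangle\leq0$ and $\|\mathbf{a}\|^2\leq\langle\mathbf{r},\mathbf{a}\rangle$ --- combined with the orthogonal decomposition of $\mathbf{r}$ and the sign from part 3. The two arguments land on equivalent inequalities (the paper's $\|\mathbf{a}\|^2+\|\mathbf{a}-\mathbf{r}^{\perp}\|^2\leq\|\mathbf{r}^{\perp}\|^2$ is identical to your $\langle\mathbf{a},\mathbf{a}-\mathbf{r}^{\perp}\rangle\leq0$), but yours is the more elementary route: it needs neither Theorem~\ref{thm:firmly_nonexpansive} nor Lemma~\ref{lemma:normal_cone_projection}, only the normal-cone characterization, at the cost of slightly more delicate sign bookkeeping; the paper's optimization-over-$t$ viewpoint makes it more transparent that the stated ball is the tightest one obtainable from this family of estimates.
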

%
Consider Theorem \ref{thm:MTFL_estimation}. Part 1 characterizes $\theta^*(\lambda)$ via the normal cone. Parts 2 and 3 illustrate key geometric identities that lead to the accurate estimation of $\theta^*(\lambda)$ in part 4 (see supplement for details).
\begin{remark}
	The estimation of the dual optimal solution in DPC and EDPP \textup{\citep{Wang-JMLR}}---that is for Lasso---are both based on the geometric properties of the projection operators. Thus, the formulas of the estimation in Theorem \textup{\ref{thm:MTFL_estimation}} are similar to that of \mbox{EDPP}. However, we note that the estimations in DPC and EDPP are determined by the completely different geometric structures of the corresponding dual feasible sets. Problem \textup{(\ref{prob:MTFL_dual})} implies that the dual feasible set of the MTFL model \textup{(\ref{prob:MTFL})} is much more \mbox{complicated} than that of Lasso---which is a polytope \textup{(}the intersection of a set of closed half spaces\textup{)}. Therefore, the estimation of the dual optimal solution in DPC is much more challenging than that of EDPP, e.g., we need to find a vector in the normal cone to the dual feasible set at $\mathbf{y}/\lambda_{\rm max}$ \textup{[}see $\mathbf{n}(\lambda_{\rm max})$\textup{]}.
\end{remark}
For notational convenience, let 
\begin{align}
	\textbf{o}(\lambda,\lambda_0)=\theta^*(\lambda_0)+\frac{1}{2}{\bf r}^{\perp}(\lambda,\lambda_0).
\end{align}
Theorem \ref{thm:MTFL_estimation} implies that $\theta^*(\lambda)$ lies in the ball:
\begin{align}\label{eqn:MTFL_estimation_ball}
	\hspace{-3mm}{\bf \Theta}(\lambda,\lambda_0)=\left\{\theta:\left\|\theta-\textbf{o}(\lambda,\lambda_0)\right\|\leq\frac{1}{2}\|{\bf r}^{\perp}(\lambda,\lambda_0)\|\right\}.
\end{align}

\subsection{Solving the Nonconvex Problem}\label{subsection:nonconvex}

In this section, we solve the optimization problem in (\ref{rule*}) with ${\bf \Theta}$ given by ${\bf \Theta}(\lambda,\lambda_0)$ [see \eqref{eqn:MTFL_estimation_ball}], namely,
\begin{align}\label{prob:nonconvex_ball}
	\hspace{-3mm}s_{\ell}(\lambda,\lambda_0)=\max_{\theta\in{\bf \Theta}(\lambda,\lambda_0)}\,\left\{g_{\ell}(\theta)=\sum\nolimits_{t=1}^T\langle{\bf x}_{\ell}^{(t)},\theta_t\rangle^2\right\}.
\end{align}
Although $g_{\ell}(\cdot)$ and ${\bf \Theta}(\lambda,\lambda_0)$ are convex, problem (\ref{prob:nonconvex_ball}) is nonconvex, as it is a maximization problem. However, we can efficiently solve for the \emph{global} optimal solutions to (\ref{prob:nonconvex_ball}) by transforming it to a QP1PC via a parametrization of the constraint set. We first cite the following result.
\begin{theorem}\label{thm:QP1PC}
	\textup{\citep{Gay1981}} Let $H$ be a symmetric matrix and $D$ be a positive definite matrix. Consider 
	\begin{align}\label{prob:QP1PC1}
		\min_{\|D\mathbf{u}\|\leq\Delta}\,\psi(\mathbf{u})=\frac{1}{2}\mathbf{u}^TH\mathbf{u}+\mathbf{q}^T\mathbf{u},
	\end{align}
	where $\Delta>0$. Then, $\mathbf{u}^*$ minimizes $\psi(\mathbf{u})$ over the constraint set if and only if
	there exists $\alpha^*\geq0$---that is unique---such that $(H+\alpha^*D^TD)\mathbf{u}^*$ is positive semidefinite,
	\begin{align}\label{eqn:QP1PC1}
		&(H+\alpha^*D^TD)\mathbf{u}^*=-\mathbf{q},
		\\\label{eqn:QP1PC2}
		&\|D\mathbf{u}^*\|=\Delta,\,\mbox{if}\,\alpha^*>0.
	\end{align}
\end{theorem}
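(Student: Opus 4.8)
The plan is to recognize this as the classical optimality characterization for a quadratic over a single (ellipsoidal) ball constraint and to prove the two implications separately after a normalizing substitution. \textbf{Reduction.} Since $D$ is positive definite, substitute $\mathbf{v}=D\mathbf{u}$: then $\psi(\mathbf{u})=\tfrac12\mathbf{v}^T(D^{-1}HD^{-1})\mathbf{v}+(D^{-1}\mathbf{q})^T\mathbf{v}$ and the constraint $\|D\mathbf{u}\|\le\Delta$ becomes the Euclidean ball $\|\mathbf{v}\|\le\Delta$. So I may assume $D=I$; the conclusions translate back verbatim, with $H+\alpha^*D^TD\succeq0$ corresponding to $D^{-1}HD^{-1}+\alpha^*I\succeq0$.

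\textbf{Sufficiency.} Suppose $\alpha^*\ge0$ and $\mathbf{u}^*$ satisfy the three conditions; put $M=H+\alpha^*D^TD\succeq0$. The engine is an exact expansion of $\psi$ about the stationary point of the shifted quadratic $\phi(\mathbf{u}):=\tfrac12\mathbf{u}^TM\mathbf{u}+\mathbf{q}^T\mathbf{u}$: since $\nabla\phi(\mathbf{u}^*)=M\mathbf{u}^*+\mathbf{q}=0$ and $\phi$ is quadratic with Hessian $M$, one has $\phi(\mathbf{u})-\phi(\mathbf{u}^*)=\tfrac12(\mathbf{u}-\mathbf{u}^*)^TM(\mathbf{u}-\mathbf{u}^*)$. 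Because $\psi(\mathbf{u})=\phi(\mathbf{u})-\tfrac{\alpha^*}{2}\|D\mathbf{u}\|^2$, this gives, for every feasible $\mathbf{u}$,
\begin{align*}
\psi(\mathbf{u})-\psi(\mathbf{u}^*)=\tfrac12(\mathbf{u}-\mathbf{u}^*)^TM(\mathbf{u}-\mathbf{u}^*)+\tfrac{\alpha^*}{2}\bigl(\|D\mathbf{u}^*\|^2-\|D\mathbf{u}\|^2\bigr).
\end{align*}
The first term is nonnegative since $M\succeq0$; the second is nonnegative because it vanishes when $\alpha^*=0$ and, when $\alpha^*>0$, feasibility forces $\|D\mathbf{u}^*\|=\Delta\ge\|D\mathbf{u}\|$. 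Hence $\mathbf{u}^*$ is a global minimizer.

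\textbf{Necessity.} Let $\mathbf{u}^*$ be a global minimizer. If $\|D\mathbf{u}^*\|<\Delta$, then $\mathbf{u}^*$ is an unconstrained local minimizer of $\psi$, so $\nabla\psi(\mathbf{u}^*)=H\mathbf{u}^*+\mathbf{q}=0$ and $\nabla^2\psi=H\succeq0$; thus $\alpha^*=0$ works (the third condition is then vacuous). If $\|D\mathbf{u}^*\|=\Delta$, the active constraint $c(\mathbf{u})=\|D\mathbf{u}\|^2-\Delta^2$ has gradient $2D^TD\mathbf{u}^*\ne0$ (as $D$ is invertible and $\mathbf{u}^*\ne0$), so LICQ holds and KKT yields $\alpha^*\ge0$ with $(H+\alpha^*D^TD)\mathbf{u}^*=-\mathbf{q}$ after absorbing a factor of $2$. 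It remains to get $H+\alpha^*D^TD\succeq0$, which is the crux. Running the identity of the sufficiency step \emph{in reverse} against any competing boundary point $\mathbf{u}$ (where $\|D\mathbf{u}\|=\|D\mathbf{u}^*\|=\Delta$, so the $\alpha^*$-term cancels) gives $0\le\psi(\mathbf{u})-\psi(\mathbf{u}^*)=\tfrac12(\mathbf{u}-\mathbf{u}^*)^TM(\mathbf{u}-\mathbf{u}^*)$; that is, $M$ is nonnegative on every chord of the constraint ellipsoid issuing from $\mathbf{u}^*$. After the reduction to $D=I$ and the sphere $\|\mathbf{u}\|=\Delta$, I would then show this chord family is rich enough to force $M\succeq0$: for an arbitrary direction $\mathbf{w}$, if $\langle\mathbf{u}^*,\mathbf{w}\rangle\ne0$ take $\mathbf{u}=\mathbf{u}^*-\tfrac{2\langle\mathbf{u}^*,\mathbf{w}\rangle}{\|\mathbf{w}\|^2}\mathbf{w}$, which lies on the sphere and makes $\mathbf{u}-\mathbf{u}^*$ a nonzero multiple of $\mathbf{w}$, whence $\mathbf{w}^TM\mathbf{w}\ge0$; if $\langle\mathbf{u}^*,\mathbf{w}\rangle=0$, apply the inequality along the great circle $\mathbf{u}(\phi)=\cos\phi\,\mathbf{u}^*+\sin\phi\,\Delta\,\mathbf{w}/\|\mathbf{w}\|$, divide by $\phi^2$, and let $\phi\to0$ to again obtain $\mathbf{w}^TM\mathbf{w}\ge0$.

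\textbf{Uniqueness and the main obstacle.} If $\alpha_1,\alpha_2\ge0$ both satisfy $(H+\alpha_iD^TD)\mathbf{u}^*=-\mathbf{q}$ for the same minimizer, subtracting gives $(\alpha_1-\alpha_2)D^TD\mathbf{u}^*=0$, so $\alpha_1=\alpha_2$ unless $\mathbf{u}^*=0$, in which case $\|D\mathbf{u}^*\|=0<\Delta$ forces $\alpha^*=0$; across possibly distinct minimizers one notes that $H+\alpha^*D^TD\succeq0$ forces $\alpha^*\ge\alpha_{\min}:=\max\{0,-\lambda_{\min}(D^{-1}HD^{-1})\}$, and on $(\alpha_{\min},\infty)$ the map $\alpha\mapsto\|D(H+\alpha D^TD)^{-1}\mathbf{q}\|$ is strictly decreasing, which pins down $\alpha^*$ (the value $\alpha^*=\alpha_{\min}$ arising only in the degenerate case and being likewise unique). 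I expect the genuinely non-routine step to be the necessity of the \emph{global} semidefiniteness $H+\alpha^*D^TD\succeq0$: first/second-order analysis alone only delivers positive semidefiniteness on the tangent subspace of the active constraint, and promoting this to all of $\mathbb{R}^n$ is exactly where the single-constraint structure (equivalently, the losslessness of the S-procedure for one quadratic constraint) enters—handled here by the chord construction above, or alternatively by invoking the S-lemma directly.
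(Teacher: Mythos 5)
The paper offers no proof of this theorem at all---it is imported verbatim from \citet{Gay1981}---so there is nothing internal to compare against. (Note also that the statement as printed contains a typo: it is the matrix $H+\alpha^*D^TD$, not the vector $(H+\alpha^*D^TD)\mathbf{u}^*$, that is positive semidefinite; your proof correctly reads it that way.) What you have written is essentially the classical Gay/Mor\'e--Sorensen argument, and its main body is correct. The identity $\psi(\mathbf{u})-\psi(\mathbf{u}^*)=\tfrac12(\mathbf{u}-\mathbf{u}^*)^TM(\mathbf{u}-\mathbf{u}^*)+\tfrac{\alpha^*}{2}\bigl(\|D\mathbf{u}^*\|^2-\|D\mathbf{u}\|^2\bigr)$ with $M=H+\alpha^*D^TD$ is exactly the right engine for sufficiency, and running it in reverse over boundary points, combined with your reflection/great-circle chord construction, is a complete and correct way to promote the second-order necessary condition (which a priori lives only on the tangent space of the active constraint) to semidefiniteness of $M$ on all of $\mathbb{R}^n$. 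You correctly identify this as the non-routine step and you handle it properly; the reduction to $D=I$ and the KKT/LICQ bookkeeping are also fine.

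The one soft spot is uniqueness of $\alpha^*$. Your argument for a single fixed minimizer is fine, but across distinct minimizers you lean on strict monotonicity of $\alpha\mapsto\|D(H+\alpha D^TD)^{-1}\mathbf{q}\|$ on $(\alpha_{\min},\infty)$ (which fails when $\mathbf{q}=0$) and then assert that the value $\alpha^*=\alpha_{\min}$ ``arising only in the degenerate case'' is ``likewise unique'' without argument---but that degenerate case is precisely the hard case, where $H+\alpha^*D^TD$ is singular and $\mathbf{u}^*$ is not given by the inverse formula, so this is the one place a reader could object. A cleaner patch that avoids all casework: if $\mathbf{u}_1^*,\mathbf{u}_2^*$ are global minimizers with multipliers $\alpha_1\le\alpha_2$, apply your sufficiency identity with $M_1=H+\alpha_1D^TD$ at $\mathbf{u}=\mathbf{u}_2^*$; since $\psi(\mathbf{u}_2^*)=\psi(\mathbf{u}_1^*)$ and both terms on the right are nonnegative, both vanish, whence $M_1(\mathbf{u}_2^*-\mathbf{u}_1^*)=0$ and so $M_1\mathbf{u}_2^*=-\mathbf{q}=M_2\mathbf{u}_2^*$, giving $(\alpha_1-\alpha_2)D^TD\mathbf{u}_2^*=0$. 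Thus either $\alpha_1=\alpha_2$, or $\mathbf{u}_2^*=0$, in which case $\|D\mathbf{u}_2^*\|=0<\Delta$ forces $\alpha_2=0$ by the complementarity condition and hence $\alpha_1=0$ as well.
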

We are now ready to solve for $s_{\ell}(\lambda,\lambda_0)$.
\begin{theorem}\label{thm:nonconvex}
	Let $\mathbf{o}=\mathbf{o}(\lambda,\lambda_0)$ and $\mathbf{u}^*$ be the optimal solution of problem \textup{(\ref{prob:QP1PC1})} with 
	$\Delta=\frac{1}{2}\|\mathbf{r}^{\perp}(\lambda,\lambda_0)\|$, $D=I$,
	\begin{align*}
		H=&-\diag(2\|\mathbf{x}\|_{\ell}^{(1)},\ldots,2\|\mathbf{x}\|_{\ell}^{(T)}),\\ 
		\mathbf{q}=&-\left(2\|\mathbf{x}_{\ell}^{(1)}\||\langle\mathbf{x}_{\ell}^{(1)},\mathbf{o}_1\rangle|,\ldots,2\|\mathbf{x}_{\ell}^{(T)}\||\langle\mathbf{x}_{\ell}^{(T)},\mathbf{o}_T\rangle|\right)^T,
	\end{align*} 
	namely, there exists a $\alpha^*\geq0$ such that $\alpha^*$ and $\mathbf{u}^*$ solve \textup{\eqsref{eqn:QP1PC1}} and \textup{(\ref{eqn:QP1PC2})}. Let
	\begin{align*}
		\rho_{\ell}=\max_{t=1,\ldots,T}\,\|{\bf x}_{\ell}^{(t)}\|,\hspace{2mm}\mathcal{I}_{\ell}=\left\{t_*:\,\|{\bf x}_{\ell}^{(t_*)}\|=\rho_{\ell}\right\}.
	\end{align*}
	Then, the following hold:	
	
	\textup{1}. $\alpha^*$ is unique, and $\alpha^*\geq2\rho_{\ell}$.
	
	\textup{2}. We define $\bar{\mathbf{u}}\in\mathbb{R}^{T}$ by
	\begin{align*}
		\bar{u}_t=
		\begin{dcases}
			-{q_t}/({h_{tt}+2\rho_{\ell}}),\hspace{2mm}\mbox{if}\,\,t\notin\mathcal{I}_{\ell},\\
			0,\hspace{24mm}\mbox{otherwise}.
		\end{dcases}
	\end{align*}
	Then, we have
	\begin{align*}
		\hspace{-3mm}\alpha^*\in
		\begin{dcases}
			2\rho_{\ell},\hspace{2mm}\mbox{if}\,\|\bar{\mathbf{u}}\|\leq\Delta,\,\mbox{and}\,\,\langle\mathbf{x}_{\ell}^{(t_*)},\mathbf{o}_{t_*}\rangle=0,\,\mbox{for}\,\,t_*\in\mathcal{I}_{\ell},\\
			(2\rho_{\ell},\infty),\hspace{2mm}\mbox{otherwise}.
		\end{dcases}
	\end{align*}
	
	\textup{3}. Let $\mathcal{V}=\{\mathbf{v}\in\mathbb{R}^{T}:v_t=0\,\mbox{for}\,t\notin\mathcal{I}_{\ell},\|\bar{\mathbf{u}}+\mathbf{v}\|=\Delta\}$. Then, we have
	\begin{align*}
		\mathbf{u}^*\in
		\begin{dcases}
			\bar{\mathbf{u}}+\mathbf{v},\,\mathbf{v}\in\mathcal{V},\hspace{7mm}\mbox{if}\,\,\alpha^*=2\rho_{\ell},\\
			-(H+\alpha^*I)^{-1}\mathbf{q},\hspace{2mm}\mbox{otherwise}.
		\end{dcases}
	\end{align*}
	
	\textup{4}. The maximum value of problem \textup{(\ref{prob:nonconvex_ball})} is given by
	\begin{align*}
		s_{\ell}(\lambda,\lambda_0)=\sum\nolimits_{t=1}^T\langle\mathbf{x}_{\ell}^{(t)},\mathbf{o}_t\rangle^2+\frac{\alpha^*}{2}\Delta^2-\frac{1}{2}\mathbf{q}^T\mathbf{u}^*.
	\end{align*}
\end{theorem}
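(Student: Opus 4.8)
The plan is to collapse the high-dimensional nonconvex problem~\eqref{prob:nonconvex_ball}, posed over the ball $\mathbf{\Theta}(\lambda,\lambda_0)\subset\mathbb{R}^N$, into a single quadratically constrained quadratic minimization in $\mathbb{R}^T$, and then apply Theorem~\ref{thm:QP1PC}. First I would use the separability of $g_\ell$ across tasks: fixing the per-task radii $u_t=\|\theta_t-\mathbf{o}_t\|\ge 0$, the Cauchy--Schwarz inequality gives
\begin{align*}
	\max_{\|\theta_t-\mathbf{o}_t\|\le u_t}\langle\mathbf{x}_\ell^{(t)},\theta_t\rangle^2=\left(|\langle\mathbf{x}_\ell^{(t)},\mathbf{o}_t\rangle|+\|\mathbf{x}_\ell^{(t)}\|\,u_t\right)^2,
\end{align*}
the optimal perturbation of $\theta_t$ being parallel to $\pm\mathbf{x}_\ell^{(t)}$ with norm $u_t$. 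Since $\|\theta-\mathbf{o}\|^2=\sum_t u_t^2$, this yields
\begin{align*}
	s_\ell(\lambda,\lambda_0)=\max_{\mathbf{u}\ge 0,\,\|\mathbf{u}\|\le\Delta}\,\sum\nolimits_{t=1}^T\left(|\langle\mathbf{x}_\ell^{(t)},\mathbf{o}_t\rangle|+\|\mathbf{x}_\ell^{(t)}\|u_t\right)^2.
\end{align*}
A componentwise sign flip $u_t\mapsto-u_t$ leaves $\|\mathbf{u}\|$ unchanged and cannot decrease any summand (both $|\langle\mathbf{x}_\ell^{(t)},\mathbf{o}_t\rangle|$ and $\|\mathbf{x}_\ell^{(t)}\|$ are nonnegative), so the constraint $\mathbf{u}\ge 0$ is redundant. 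Expanding the squares and negating, $\sum_t\langle\mathbf{x}_\ell^{(t)},\mathbf{o}_t\rangle^2-s_\ell(\lambda,\lambda_0)$ is exactly the minimum value of $\psi$ in~\eqref{prob:QP1PC1} with the stated $H$, $\mathbf{q}$, $D=I$ and $\Delta=\tfrac12\|\mathbf{r}^\perp(\lambda,\lambda_0)\|$.

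Next I would apply Theorem~\ref{thm:QP1PC}, which produces the unique $\alpha^*\ge 0$ with $H+\alpha^*I$ positive semidefinite, $(H+\alpha^*I)\mathbf{u}^*=-\mathbf{q}$, and $\|\mathbf{u}^*\|=\Delta$ whenever $\alpha^*>0$; the uniqueness is precisely the uniqueness asserted in Part~1. Because $H$ is diagonal, $H+\alpha^*I\succeq 0$ holds if and only if $\alpha^*$ dominates every diagonal entry of $-H$, i.e. $\alpha^*\ge 2\rho_\ell$. Since $\rho_\ell>0$ (if $\mathbf{x}_\ell^{(t)}=\mathbf{0}$ for all $t$ then $g_\ell\equiv 0$ and the feature is trivially inactive), this forces $\alpha^*>0$, hence $\|\mathbf{u}^*\|=\Delta$, which completes Part~1 and records the equality used later in Part~4.

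For Parts~2 and~3 I would split into the case $\alpha^*>2\rho_\ell$ and the boundary case $\alpha^*=2\rho_\ell$. If $\alpha^*>2\rho_\ell$, then $H+\alpha^*I$ is positive definite, hence invertible, so $\mathbf{u}^*=-(H+\alpha^*I)^{-1}\mathbf{q}$ is forced---the second branch in Part~3. If $\alpha^*=2\rho_\ell$, then $H+2\rho_\ell I$ is singular with null space $\{\mathbf{v}\in\mathbb{R}^T:v_t=0\text{ for }t\notin\mathcal{I}_\ell\}$, so solvability of $(H+2\rho_\ell I)\mathbf{u}^*=-\mathbf{q}$ forces $q_{t_*}=0$, i.e. $\langle\mathbf{x}_\ell^{(t_*)},\mathbf{o}_{t_*}\rangle=0$, for every $t_*\in\mathcal{I}_\ell$; the solution restricted to the invertible block $\{t\notin\mathcal{I}_\ell\}$ is exactly $\bar{\mathbf{u}}$, and the full solution set is $\{\bar{\mathbf{u}}+\mathbf{v}:v_t=0\text{ for }t\notin\mathcal{I}_\ell\}$. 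Because $\bar{\mathbf{u}}$ is orthogonal to that null space, $\|\mathbf{u}^*\|=\Delta$ becomes $\|\bar{\mathbf{u}}\|^2+\|\mathbf{v}\|^2=\Delta^2$, solvable iff $\|\bar{\mathbf{u}}\|\le\Delta$, and the admissible $\mathbf{v}$'s are exactly $\mathcal{V}$. Conversely, when $\|\bar{\mathbf{u}}\|\le\Delta$ and $\langle\mathbf{x}_\ell^{(t_*)},\mathbf{o}_{t_*}\rangle=0$ for all $t_*\in\mathcal{I}_\ell$, the pair $\alpha=2\rho_\ell$ together with any $\mathbf{u}^*=\bar{\mathbf{u}}+\mathbf{v}$, $\mathbf{v}\in\mathcal{V}$, satisfies all the conditions in Theorem~\ref{thm:QP1PC}, so the uniqueness of $\alpha^*$ forces $\alpha^*=2\rho_\ell$. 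This proves the dichotomy of Part~2 and the formula of Part~3. Finally, for Part~4, multiplying $(H+\alpha^*I)\mathbf{u}^*=-\mathbf{q}$ by $\tfrac12(\mathbf{u}^*)^T$ gives $\tfrac12(\mathbf{u}^*)^TH\mathbf{u}^*=-\tfrac12\mathbf{q}^T\mathbf{u}^*-\tfrac{\alpha^*}{2}\|\mathbf{u}^*\|^2$, so the minimum value of $\psi$ equals $\tfrac12\mathbf{q}^T\mathbf{u}^*-\tfrac{\alpha^*}{2}\Delta^2$; substituting this into $s_\ell(\lambda,\lambda_0)=\sum_t\langle\mathbf{x}_\ell^{(t)},\mathbf{o}_t\rangle^2-\min\psi$ from the first step gives the stated expression.

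I expect the main obstacle to be the boundary case $\alpha^*=2\rho_\ell$---the ``hard case'' of trust-region subproblems---where $\mathbf{u}^*$ is no longer unique and one must carefully reconcile three ingredients: the Fredholm solvability condition $q_{t_*}=0$ on $\mathcal{I}_\ell$, the parametrization of the null space of $H+2\rho_\ell I$, and the active constraint $\|\mathbf{u}^*\|=\Delta$; in particular one needs both that these conditions are necessary for $\alpha^*$ to attain $2\rho_\ell$ and, using the uniqueness of $\alpha^*$, that they are sufficient. The block reduction in the first step is the conceptual heart but is otherwise routine, and the value computation in Part~4 and the semidefiniteness argument in Part~1 are immediate once the reduction and Theorem~\ref{thm:QP1PC} are in hand.
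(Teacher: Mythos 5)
Your proposal is correct and follows essentially the same route as the paper: the same parametrization of the ball $\mathbf{\Theta}(\lambda,\lambda_0)$ into per-task radii, the same Cauchy--Schwarz collapse to a QP1QC in $\mathbb{R}^T$, and then an appeal to Theorem~\ref{thm:QP1PC}. The only difference is one of completeness---the paper's proof stops at ``the statement follows immediately from Theorem~\ref{thm:QP1PC},'' whereas you explicitly work out the semidefiniteness bound $\alpha^*\geq2\rho_{\ell}$, the hard-case analysis ($q_{t_*}=0$ on $\mathcal{I}_{\ell}$, the null-space parametrization, and $\|\bar{\mathbf{u}}\|\leq\Delta$), and the value formula in Part~4, all of which are correct.
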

\begin{proof}
	We first transform problem (\ref{prob:nonconvex_ball}) to a QP1PC by a 
	parameterization of $\mathbf{\Theta}(\lambda,\lambda_0)$:
	\begin{align*}
		\hspace{-2mm}&\mathbf{\Theta}(\lambda,\lambda_0)\\= \nonumber
		&\left\{
		\begin{pmatrix}
			\mathbf{o}_1+u_1{\theta}_1\\
			\vdots\\
			\mathbf{o}_T+u_T{\theta}_T
		\end{pmatrix}
		:\|\mathbf{u}\|\leq r,\|\theta_t\|\leq1,,t=1,\ldots,T\right\},
	\end{align*}
	where $\mathbf{u}=(u_1,\ldots,u_T)^T$.
	We define
	\begin{align*}
		h_{\ell}(\mathbf{u},\theta)=g_{\ell}\left(\begin{pmatrix}
			\mathbf{o}_1+u_1{\theta}_1\\
			\vdots\\
			\mathbf{o}_T+u_T{\theta}_T
		\end{pmatrix}\right).
	\end{align*}  
	Thus, problem (\ref{prob:nonconvex_ball}) becomes
	\begin{align*}
		s_{\ell}(\lambda,\lambda_0)=\max_{\|\mathbf{u}\|\leq \Delta}\,\left\{\max_{\{\theta:\|\theta_t\|\leq1,t=1,\ldots,T\}}\,h_{\ell}(\mathbf{u},\theta)\right\}.
	\end{align*}
	By the Cauchy-Schwartz inequality, for a fixed $\mathbf{u}$, we have
	\begin{align*}
		&\phi(\mathbf{u})=\max_{\{\theta:\|\theta_t\|\leq1,t=1,\ldots,T\}}\,h_{\ell}(\mathbf{u},\theta)\\ \nonumber
		&=\sum\nolimits_{t=1}^Tu_t^2\|\mathbf{x}_{\ell}^{(t)}\|^2+2|u_t|\|\mathbf{x}_{\ell}^{(t)}\||\langle\mathbf{x}_{\ell}^{(t)},\mathbf{o}_t\rangle|+\langle\mathbf{x}_{\ell}^{(t)},\mathbf{o}_t\rangle^2.
	\end{align*} 
	Let $-\psi(\mathbf{u})=\sum\nolimits_{t=1}^Tu_t^2\|\mathbf{x}_{\ell}^{(t)}\|^2+2u_t\|\mathbf{x}_{\ell}^{(t)}\||\langle\mathbf{x}_{\ell}^{(t)},\mathbf{o}_t\rangle|$.
	We can see that
	\begin{align*}
		\max\nolimits_{\|\mathbf{u}\|\leq r}\,\phi(\mathbf{u})
		=\max\nolimits_{\|\mathbf{u}\|\leq r}\,-\psi(\mathbf{u})+\sum\nolimits_{t=1}^T\langle\mathbf{x}_{\ell}^{(t)},\mathbf{o}_t\rangle^2.
	\end{align*}
	Thus, problem (\ref{prob:nonconvex_ball}) becomes
	\begin{align*}
		s_{\ell}(\lambda,\lambda_0)=-\min\nolimits_{\|\mathbf{u}\|\leq r}\,\psi(\mathbf{u})+\sum\nolimits_{t=1}^T\langle\mathbf{x}_{\ell}^{(t)},\mathbf{o}_t\rangle^2.
	\end{align*}
	Therefore, to solve (\ref{prob:nonconvex_ball}), it suffices to solve problem (\ref{prob:QP1PC1}) with $\Delta$, $D$, $H$, and $\mathbf{q}$ as in the theorem.
	
	The statement follows immediately from Theorem \ref{thm:QP1PC}.
\end{proof}
\begin{remark}
	To develop the DPC rule, (\ref{rule*}) implies that we only need the maximum value of problem \textup{(\ref{prob:nonconvex_ball})}. Thus, Theorem \ref{thm:QP1PC} does not show the global optimal solutions. However, in view of the proof, we can easily compute the global optimal solutions in terms of $\alpha^*$ and $\mathbf{u}^*$.
\end{remark}
\textbf{Computing $\alpha^*$ and $\mathbf{u}^*$}
Consider Theorem \textup{\ref{thm:nonconvex}}. If $\|\bar{\mathbf{u}}\|\leq\Delta$ and $\langle\mathbf{x}_{\ell}^{(t_*)},\mathbf{o}_{t_*}\rangle=0$ for $t_*\in\mathcal{I}_{\ell}$, then $\alpha^*$ and $\mathbf{u}^*$ admit closed form solutions. Otherwise, $\alpha^*$ is strictly larger than $2\rho_{\ell}$, which implies that $H+\alpha^*I$ is positive definite and invertible. If this is the case, we apply Newton's method \textup{\citep{Gay1981}} to find $\alpha^*$ as follows. Let 
\begin{align*}
	\varphi(\alpha)=\|(H+\alpha I)^{-1}\mathbf{q}\|^{-1}-\Delta^{-1}.
\end{align*}
Because $\varphi(\cdot)$ is strictly increasing on $(2\rho_{\ell},\infty)$, $\alpha^*$ is the \mbox{unique} root of $\varphi(\cdot)$ on $(2\rho_{\ell},\infty)$.	
Let $\alpha_0=2\rho_{\ell}$. Then, the $k^{th}$ iteration of Newton's method to solve $\varphi(\alpha^*)=0$ is:
\begin{align}
	\mathbf{u}_k=&-(H+\alpha_{k-1} I)^{-1}\mathbf{q},\\
	\alpha_k = & \alpha_{k-1}+\|\mathbf{u}_k\|^2\frac{\|\mathbf{u}_k\|-\Delta}{\Delta\mathbf{u}_{k}^T(H+\alpha_{k-1}I)^{-1}\mathbf{u}_k}.
\end{align}
As pointed out by \textup{\citet{More1983}}, Newton's method is very efficient to find $\alpha^*$ as $\varphi(\alpha)$ is almost linear on $(2\rho_{\ell},\infty)$. Our experiments indicates that five iterations usually leads to an accuracy higher than $10^{-15}$.

\subsection{The Proposed DPC Rule}\label{subsection:DPC}

As implied by \ref{rule*}, we present the proposed screening rule, DPC, for the MTFL model (\ref{prob:MTFL}) in the following theorem.
\begin{theorem}\label{thm:DPC}
	For the MTFL model \textup{(\ref{prob:MTFL})}, suppose that $\theta^*(\lambda_0)$ is known with $\lambda_0\in(0,\lambda_{\rm max}]$. Then, we have
	\begin{align*}
		s_{\ell}(\lambda,\lambda_0)<1\Rightarrow(\mathbf{w}^{\ell})^*(\lambda)=0,\,\lambda\in(0,\lambda_0),
	\end{align*}
	where $s_{\ell}(\lambda,\lambda_0)$ is given by Theorem \textup{\ref{thm:nonconvex}}.
\end{theorem}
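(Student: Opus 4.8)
The plan is to derive Theorem \ref{thm:DPC} by instantiating the relaxed screening template (\ref{rule*}) with $\mathbf{\Theta}=\mathbf{\Theta}(\lambda,\lambda_0)$, chaining the three ingredients already in place: the exact rule (\ref{rule}), the ball estimate of Theorem \ref{thm:MTFL_estimation}, and the global-optimum formula of Theorem \ref{thm:nonconvex}. First I would recall from the KKT condition (\ref{eqn:MTFL_KKT2}) that $g_{\ell}(\theta^*(\lambda))=1$ whenever $(\mathbf{w}^{\ell})^*(\lambda)\neq0$; its contrapositive is precisely (\ref{rule}), so it suffices to produce a computable upper bound on $g_{\ell}(\theta^*(\lambda))$ that is strictly below $1$.

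Second I would pin down the location of $\theta^*(\lambda)$. The hypothesis that $\theta^*(\lambda_0)$ is known with $\lambda_0\in(0,\lambda_{\rm max}]$ is exactly what Theorem \ref{thm:MTFL_estimation} requires, the endpoint case $\lambda_0=\lambda_{\rm max}$ being covered by the closed-form value $\theta^*(\lambda_{\rm max})=\mathbf{y}/\lambda_{\rm max}$ from Theorem \ref{thm:MTFL_primal_dual_closed_form}. I would also check that $\mathbf{n}(\lambda_0)\neq\mathbf{0}$, so that $\mathbf{r}^{\perp}(\lambda,\lambda_0)$ and the ball $\mathbf{\Theta}(\lambda,\lambda_0)$ in (\ref{eqn:MTFL_estimation_ball}) are well defined: for $\lambda_0<\lambda_{\rm max}$ this holds because $\theta^*(\lambda_0)\neq\mathbf{y}/\lambda_0$ by Theorem \ref{thm:MTFL_primal_dual_closed_form}, and for $\lambda_0=\lambda_{\rm max}$ because $g_{\ell_*}(\mathbf{y}/\lambda_{\rm max})=1>0$ and $g_{\ell_*}$ is a positive semidefinite quadratic form, so its gradient there cannot vanish. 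Part 4 of Theorem \ref{thm:MTFL_estimation} then yields $\theta^*(\lambda)\in\mathbf{\Theta}(\lambda,\lambda_0)$ for every $\lambda\in(0,\lambda_0)$.

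Third I would invoke Theorem \ref{thm:nonconvex}: by its construction and part 4, $s_{\ell}(\lambda,\lambda_0)=\max_{\theta\in\mathbf{\Theta}(\lambda,\lambda_0)}g_{\ell}(\theta)$, since the parameterization of the ball used in its proof is a surjection onto $\mathbf{\Theta}(\lambda,\lambda_0)$. Combining the pieces: $\theta^*(\lambda)\in\mathbf{\Theta}(\lambda,\lambda_0)$ gives $g_{\ell}(\theta^*(\lambda))\leq s_{\ell}(\lambda,\lambda_0)$; hence $s_{\ell}(\lambda,\lambda_0)<1$ forces $g_{\ell}(\theta^*(\lambda))<1$; and the contrapositive of (\ref{eqn:MTFL_KKT2}) then gives $(\mathbf{w}^{\ell})^*(\lambda)=0$, which is exactly the assertion, valid for all $\lambda\in(0,\lambda_0)$.

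Because the substantive work---reformulating the dual as a projection, estimating $\theta^*(\lambda)$ by a ball via firm nonexpansiveness and the normal-cone characterization, and solving the nonconvex maximization through the QP1QC reduction---has already been carried out upstream, there is no genuinely hard step left here. The only point that merits care is the bookkeeping at the junctions: verifying that ``$\theta^*(\lambda_0)$ known'' together with Theorem \ref{thm:MTFL_primal_dual_closed_form} legitimately supplies the input to Theorem \ref{thm:MTFL_estimation}, and that the value $s_{\ell}(\lambda,\lambda_0)$ delivered by the QP1QC reformulation is the maximum of $g_{\ell}$ over the \emph{entire} ball $\mathbf{\Theta}(\lambda,\lambda_0)$ and not merely over a proper subset of it, so that the key inequality $g_{\ell}(\theta^*(\lambda))\leq s_{\ell}(\lambda,\lambda_0)$ does hold. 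That ``parameterization covers the whole ball'' point is the one place I would double-check the details.
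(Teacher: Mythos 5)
Your proposal is correct and follows exactly the route the paper intends: the paper presents Theorem \ref{thm:DPC} as an immediate consequence of the template (\ref{rule*}), with $\mathbf{\Theta}=\mathbf{\Theta}(\lambda,\lambda_0)$ supplied by part 4 of Theorem \ref{thm:MTFL_estimation} and the maximum $s_{\ell}(\lambda,\lambda_0)$ supplied by Theorem \ref{thm:nonconvex}, and gives no further proof. The two points you flag for care---that $\mathbf{n}(\lambda_0)\neq\mathbf{0}$ in both cases and that the parameterization $\{(\mathbf{o}_t+u_t\theta_t)_{t}:\|\mathbf{u}\|\leq\Delta,\ \|\theta_t\|\leq1\}$ is exactly the ball $\mathbf{\Theta}(\lambda,\lambda_0)$---both check out, so no gap remains.
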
 
In real applications, the optimal parameter value of $\lambda$ is generally unknown. Commonly used approaches to determine an appropriate value of $\lambda$, such as cross validation and stability selection, need to solve the MTFL model over a grid of tuning parameter values $\lambda_1>\lambda_2>\ldots>\lambda_{\mathcal{K}}$, which is very time consuming. Inspired by the ideas of Strong Rule \citep{Tibshirani2011} and SAFE \citep{Ghaoui2012}, we develop the sequential version of DPC. Specifically, suppose that the optimal solution $W^*(\lambda_{k})$ is known. Then, we apply DPC to identify the inactive features of MTFL model (\ref{prob:MTFL}) at $\lambda_{k+1}$ via $W^*(\lambda_{k})$. We repeat this process until all $W^*(\lambda_k)$, $k=1,\ldots,\mathcal{K}$ are computed. 
\begin{corollary}\label{corollary:DPCs}
	\textup{\textbf{DPC}} For the MTFL model \textup{(\ref{prob:MTFL})}, suppose that we are given a sequence of parameter values $\lambda_{\rm max}=\lambda_0>\lambda_1>\ldots>\lambda_{\mathcal{K}}$. Then, for any $k=1,2,\ldots,\mathcal{K}-1$, if $W^*(\lambda_k)$ is known, we have 
	\begin{align*}
		s_{\ell}(\lambda_{k+1},\lambda_k)<1\Rightarrow(\mathbf{w}^{\ell})^*(\lambda_{k+1})=0,
	\end{align*}	
	where $s_{\ell}(\lambda,\lambda_0)$ is given by Theorem \textup{\ref{thm:nonconvex}}.
\end{corollary}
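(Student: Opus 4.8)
The plan is to reduce Corollary \ref{corollary:DPCs} to a direct application of Theorem \ref{thm:DPC}. The only gap between the two statements is that Theorem \ref{thm:DPC} presumes knowledge of the dual optimal solution $\theta^*(\lambda_0)$, whereas in the sequential setting we are instead handed the primal optimal solution $W^*(\lambda_k)$. So the first step is to recover $\theta^*(\lambda_k)$ from $W^*(\lambda_k)$. For this I would invoke the KKT stationarity condition \eqref{eqn:MTFL_KKT1}, which reads ${\bf y}_t = X_t{\bf w}_t^*(\lambda_k) + \lambda_k\theta_t^*(\lambda_k)$ for $t = 1,\ldots,T$. Since $\lambda_k > 0$, this pins down $\theta_t^*(\lambda_k) = \tfrac{1}{\lambda_k}\big({\bf y}_t - X_t{\bf w}_t^*(\lambda_k)\big)$ in terms of $W^*(\lambda_k)$, and uniqueness is in any case guaranteed because, by \eqref{eqn:theta*_proj}, $\theta^*(\lambda_k) = {\rm P}_{\mathcal{F}}({\bf y}/\lambda_k)$ is a projection onto the closed convex set $\mathcal{F}$. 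Hence $\theta^*(\lambda_k)$ is available, and the hypothesis of Theorem \ref{thm:DPC} is met with the parameter $\lambda_0$ there playing the role of $\lambda_k$.

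Next I would verify the parameter ranges. Since $\lambda_{\rm max} = \lambda_0 > \lambda_1 > \cdots > \lambda_{\mathcal{K}}$ and $k \ge 1$, we have $\lambda_k \in (0,\lambda_{\rm max})$, so in particular $\lambda_k \in (0,\lambda_{\rm max}]$ as required by Theorem \ref{thm:DPC}; and $\lambda_{k+1} \in (0,\lambda_k)$, matching the range of the ``current'' parameter $\lambda$ there. Applying Theorem \ref{thm:DPC} with $(\lambda,\lambda_0) \mapsto (\lambda_{k+1},\lambda_k)$ then gives exactly $s_{\ell}(\lambda_{k+1},\lambda_k) < 1 \Rightarrow ({\bf w}^{\ell})^*(\lambda_{k+1}) = 0$, where $s_{\ell}(\lambda_{k+1},\lambda_k)$ is computed through Theorem \ref{thm:nonconvex}: one forms ${\bf o}(\lambda_{k+1},\lambda_k)$ and ${\bf r}^{\perp}(\lambda_{k+1},\lambda_k)$ from $\theta^*(\lambda_k)$ via Theorem \ref{thm:MTFL_estimation}, then solves the associated QP1PC. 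Running this argument for every $k = 1,\ldots,\mathcal{K}-1$ completes the proof.

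There is no serious obstacle here; the corollary is essentially bookkeeping, with all the substance residing in Theorems \ref{thm:MTFL_estimation}, \ref{thm:nonconvex}, and \ref{thm:DPC}. The two points worth stating carefully are the recovery of $\theta^*(\lambda_k)$ from $W^*(\lambda_k)$ via \eqref{eqn:MTFL_KKT1}, and the remark that the missing initial step---screening at $\lambda_1$ from $\lambda_0 = \lambda_{\rm max}$---is already covered by Theorem \ref{thm:DPC} together with the closed form $\theta^*(\lambda_{\rm max}) = {\bf y}/\lambda_{\rm max}$ of Theorem \ref{thm:MTFL_primal_dual_closed_form}, so restricting the corollary's index to $k \ge 1$ loses nothing.
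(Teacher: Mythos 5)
Your proposal is correct and follows exactly the route the paper intends: the paper omits the proof, stating only that the corollary is a direct application of Theorem \ref{thm:DPC}, and your argument supplies precisely the bookkeeping that application requires (recovering $\theta^*(\lambda_k)$ from $W^*(\lambda_k)$ via \eqref{eqn:MTFL_KKT1} and checking the parameter ranges).
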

We omit the proof of Corollary \ref{corollary:DPCs} since it is a direct application of Theorem \ref{thm:DPC}.

\section{Experiments}\label{section:experiments}

We evaluate DPC on both synthetic and real data sets. To measure the performance of DPC, we report the \emph{rejection ratio}, namely, the ratio of the number of inactive features identified by DPC to the actual number of inactive features. We also report the \emph{speedup}, i.e., the ratio of the running time of solver without screening to the running time of solver with DPC. The solver is from the SLEP package \citep{SLEP}. For each data set, we solve the MTFL model in (\ref{prob:MTFL}) along a sequence of $100$ tuning parameter values of $\lambda$ \mbox{equally} spaced on the logarithmic scale of ${\lambda}/{\lambda_{\rm max}}$ from $1.0$ to $0.01$. We only evaluate DPC since no existing screening rule is applicable for the MTFL model in (\ref{prob:MTFL}).

\subsection{Synthetic Studies}\label{subsection:experiments_synthetic}
We perform experiments on two synthetic data sets, called Synthetic 1 and Synthetic 2, that are commonly used in the literature \citep{Tibshirani2011,Zou2005}. Both synthetic 1 and Synthetic 2 have $50$ tasks. Each task contains $50$ samples. For $t=1,\ldots,50$, the true model is $$\mathbf{y}_t=\mathbf{X}_t\mathbf{w}_t^*+0.01\epsilon,\,\epsilon\sim N(0,1).$$ For Synthetic 1,  the entries of each data matrix ${\bf X}_t$ are i.i.d. standard Gaussian with
pairwise correlation zero, i.e., ${\rm corr}\left({\bf x}_i^{(t)},{\bf x}_j^{(t)}\right)=0$. For Synthetic 2, the entries of each data matrix ${\bf X}_t$ are drawn from i.i.d. standard Gaussian with pairwise correlation $0.5^{|i-j|}$, i.e., ${\rm corr}\left({\bf x}_i^{(t)},{\bf x}_j^{(t)}\right)=0.5^{|i-j|}$. To construct $\mathbf{w}_t^*$, we first randomly select $10\%$ of the features. Then, the corresponding components of $\mathbf{w}_t^*$ are populated from a standard Gaussian, and the remaining ones are set to $0$. For both Synthetic 1 and Synthetic 2, we set the feature dimension to $10000$, $20000$, and $50000$, \mbox{respectively}. For each setting, we run $20$ trials and report the average performance in \figref{fig:rej_ratio_synthetic} and Table \ref{table:DPC_runtime}.
\begin{figure*}[ht!]
	\centering{
		\subfigure[Synthetic 1, $d=10000$] { \label{fig:syn1_1}
			\includegraphics[width=0.3\columnwidth]{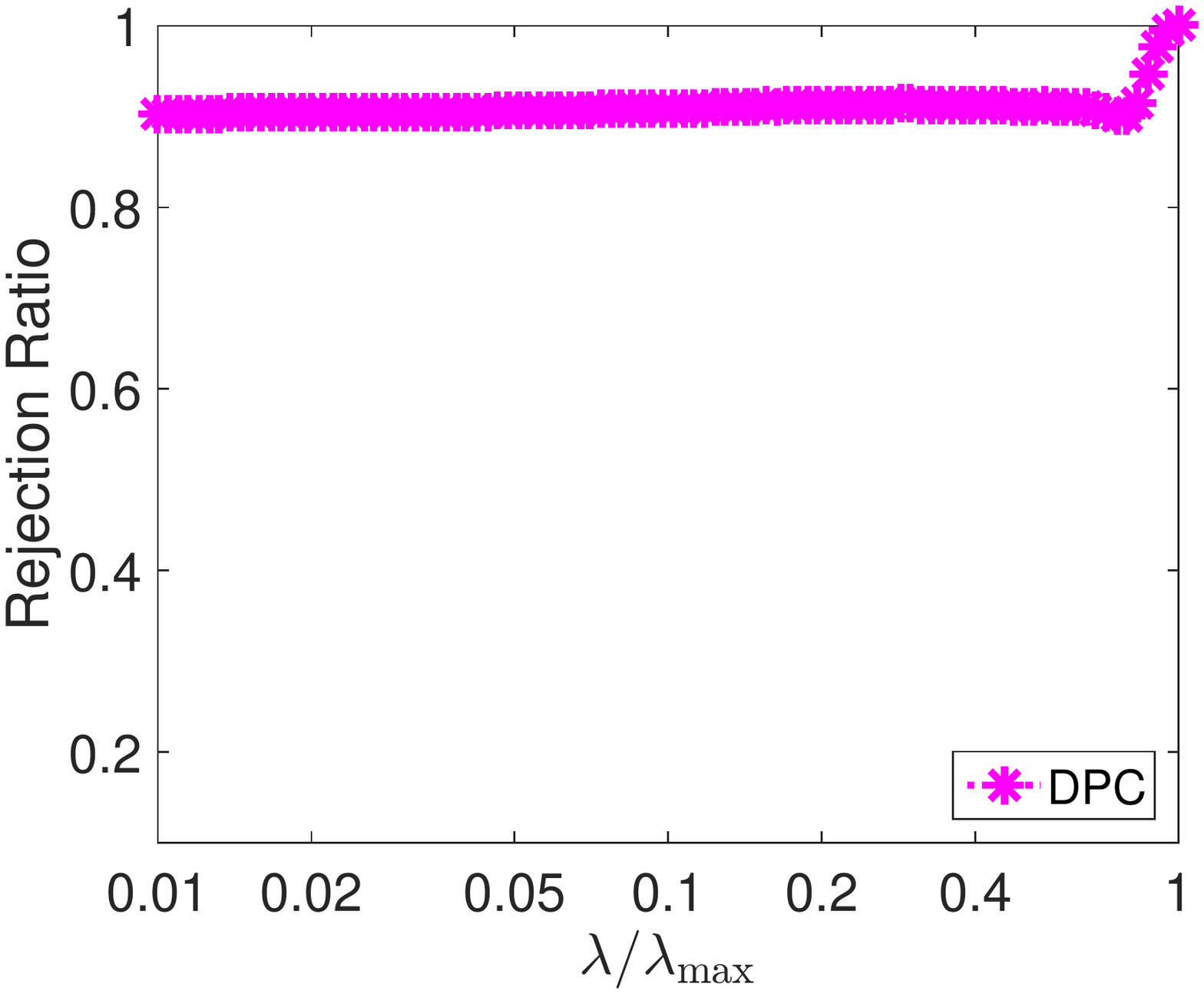}
		}
		\subfigure[Synthetic 1, $d=20000$] { \label{fig:syn1_2}
			\includegraphics[width=0.3\columnwidth]{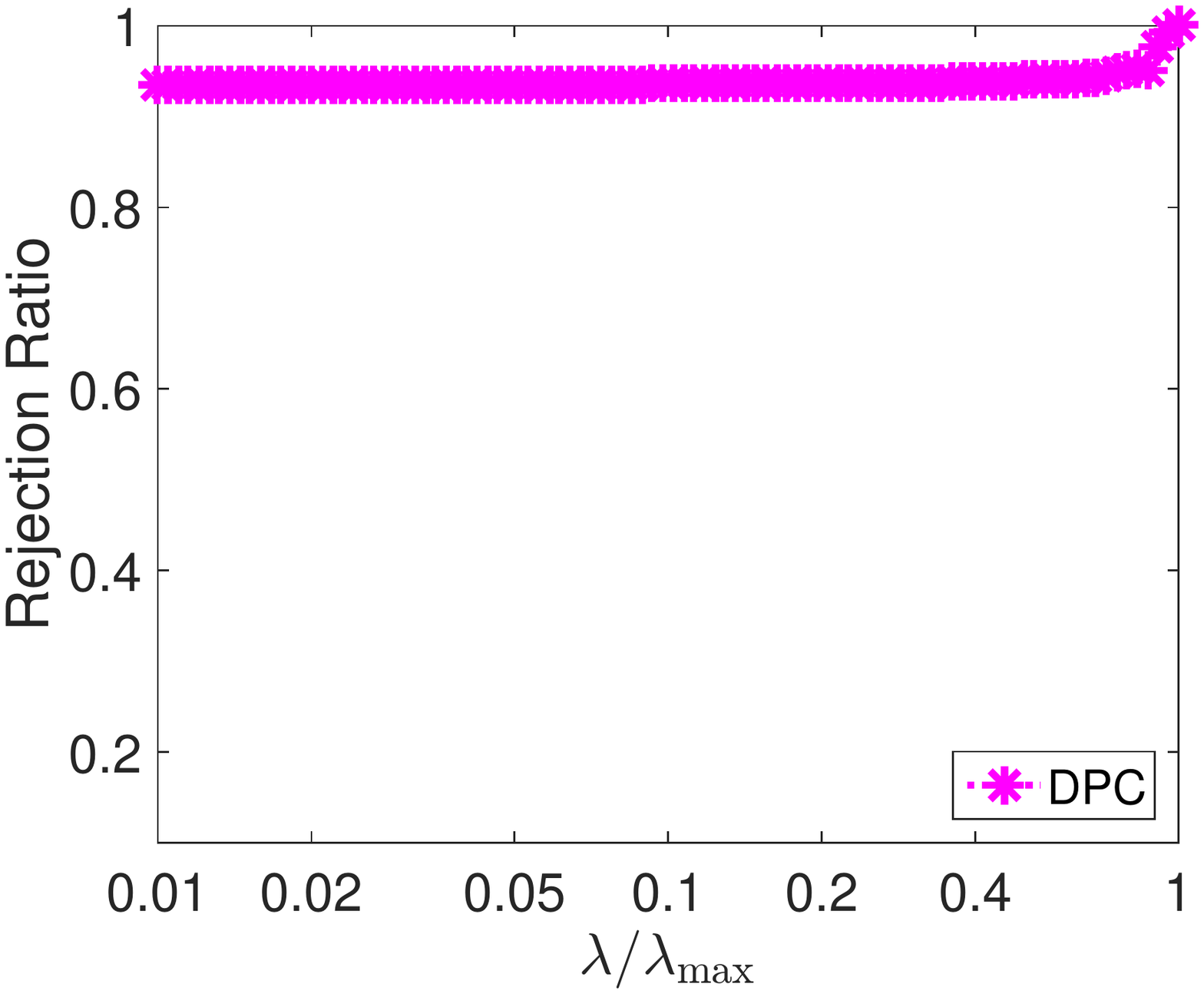}
		}
		\subfigure[Synthetic 1, $d=50000$] { \label{fig:syn1_3}
			\includegraphics[width=0.3\columnwidth]{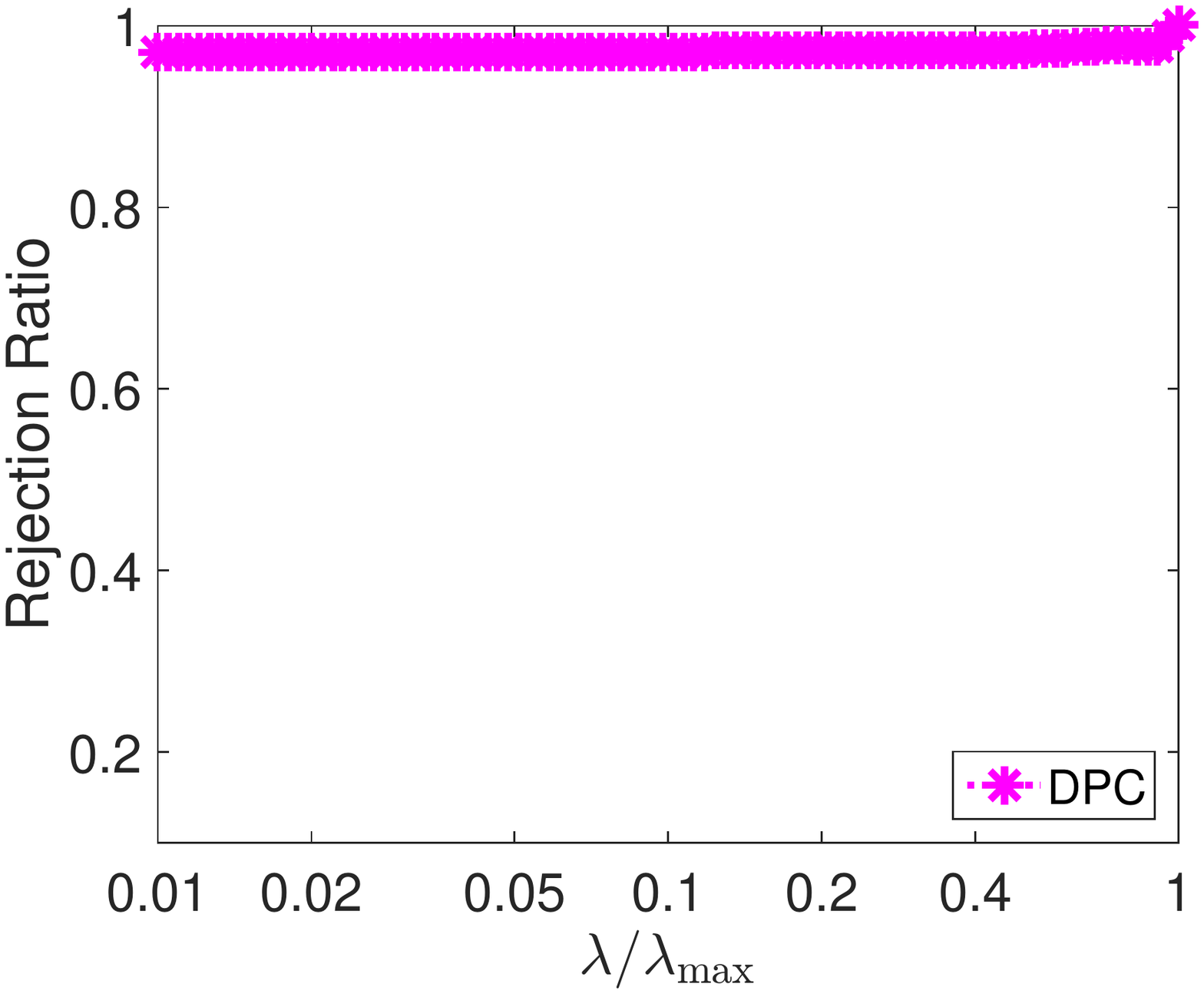}
		}
		\\
		\subfigure[Synthetic 2, $d=10000$] { \label{fig:syn2_1}
			\includegraphics[width=0.3\columnwidth]{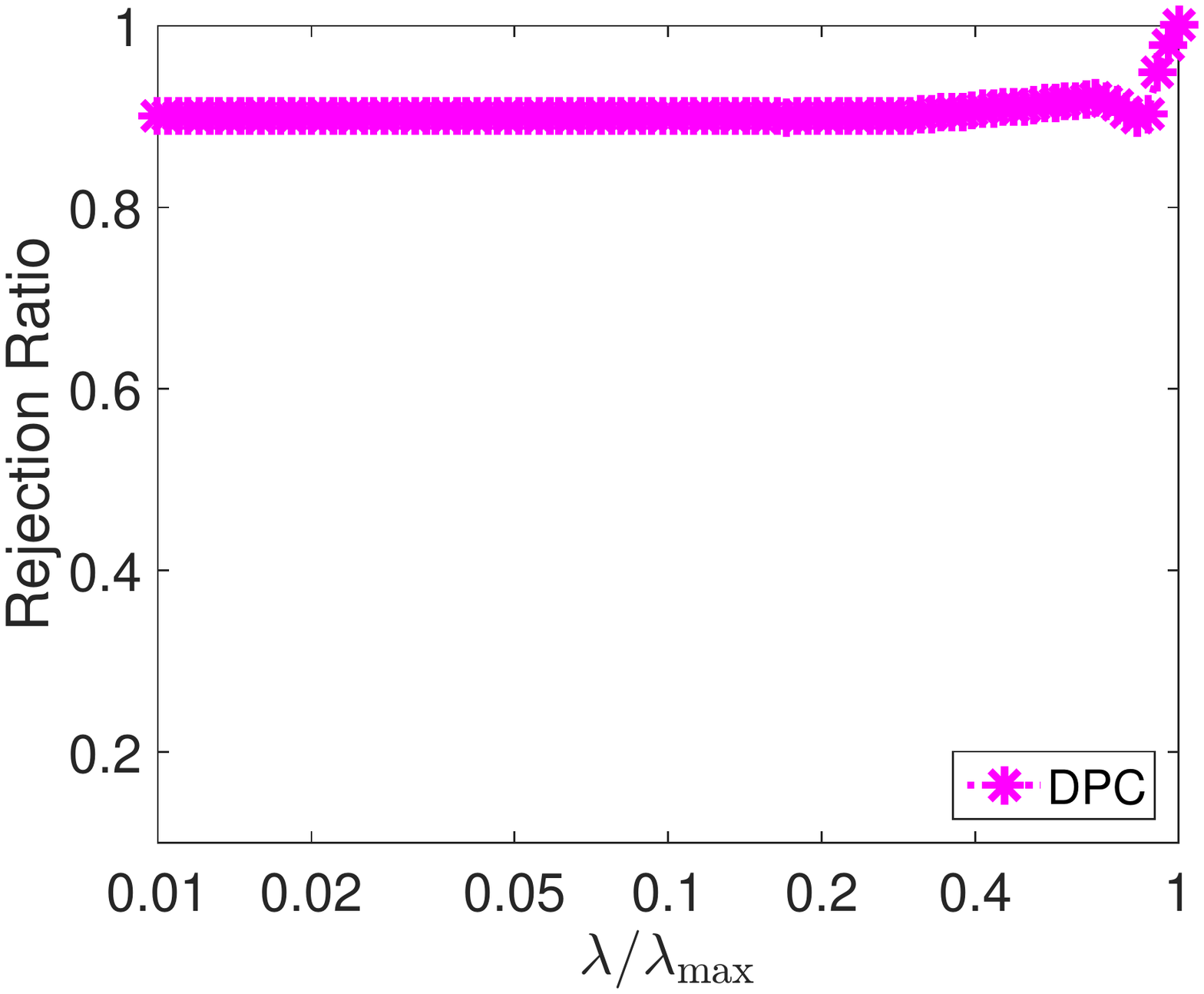}
		}
		\subfigure[Synthetic 2, $d=20000$] { \label{fig:syn2_2}
			\includegraphics[width=0.3\columnwidth]{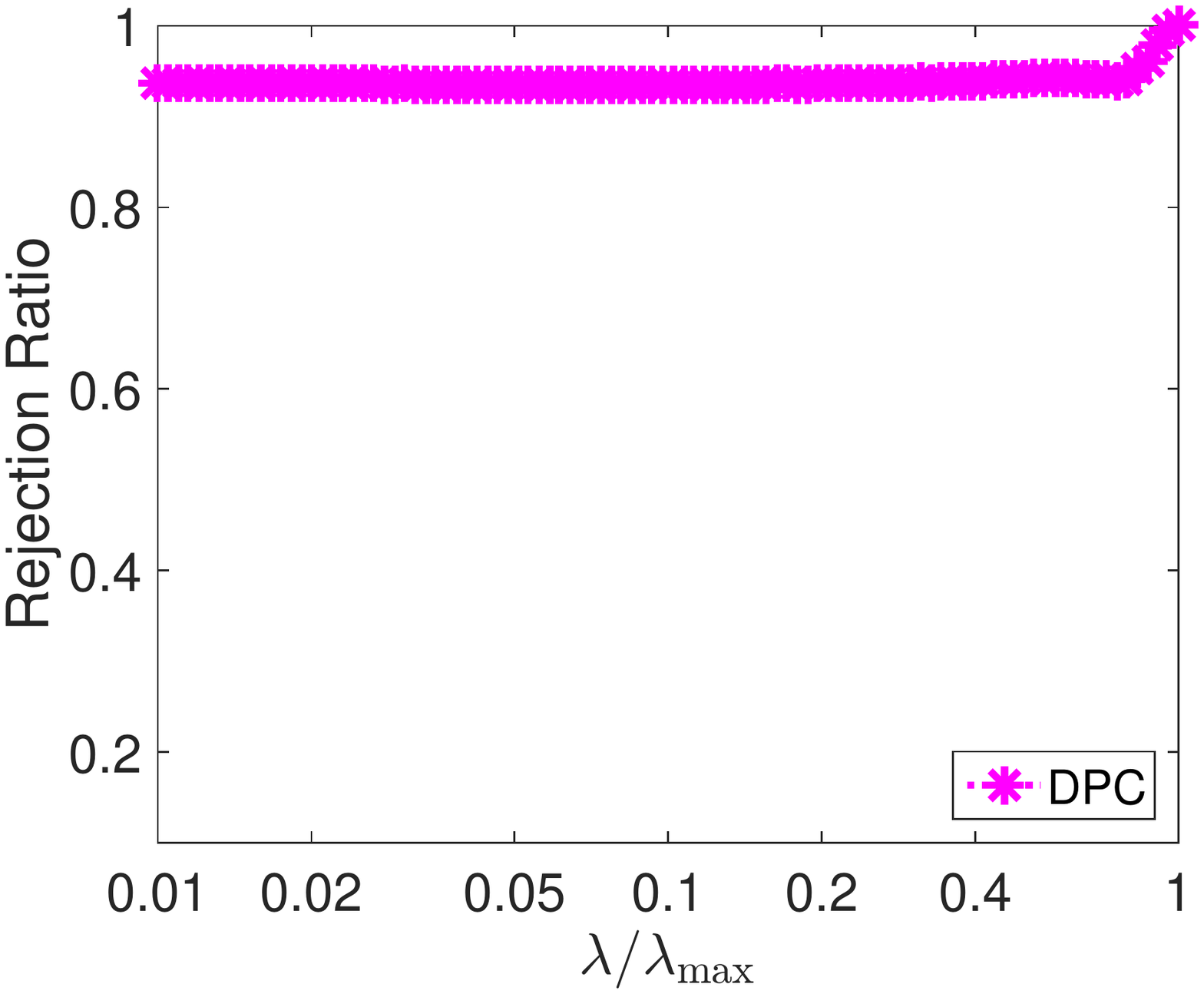}
		}
		\subfigure[Synthetic 2, $d=50000$] { \label{fig:syn2_3}
			\includegraphics[width=0.3\columnwidth]{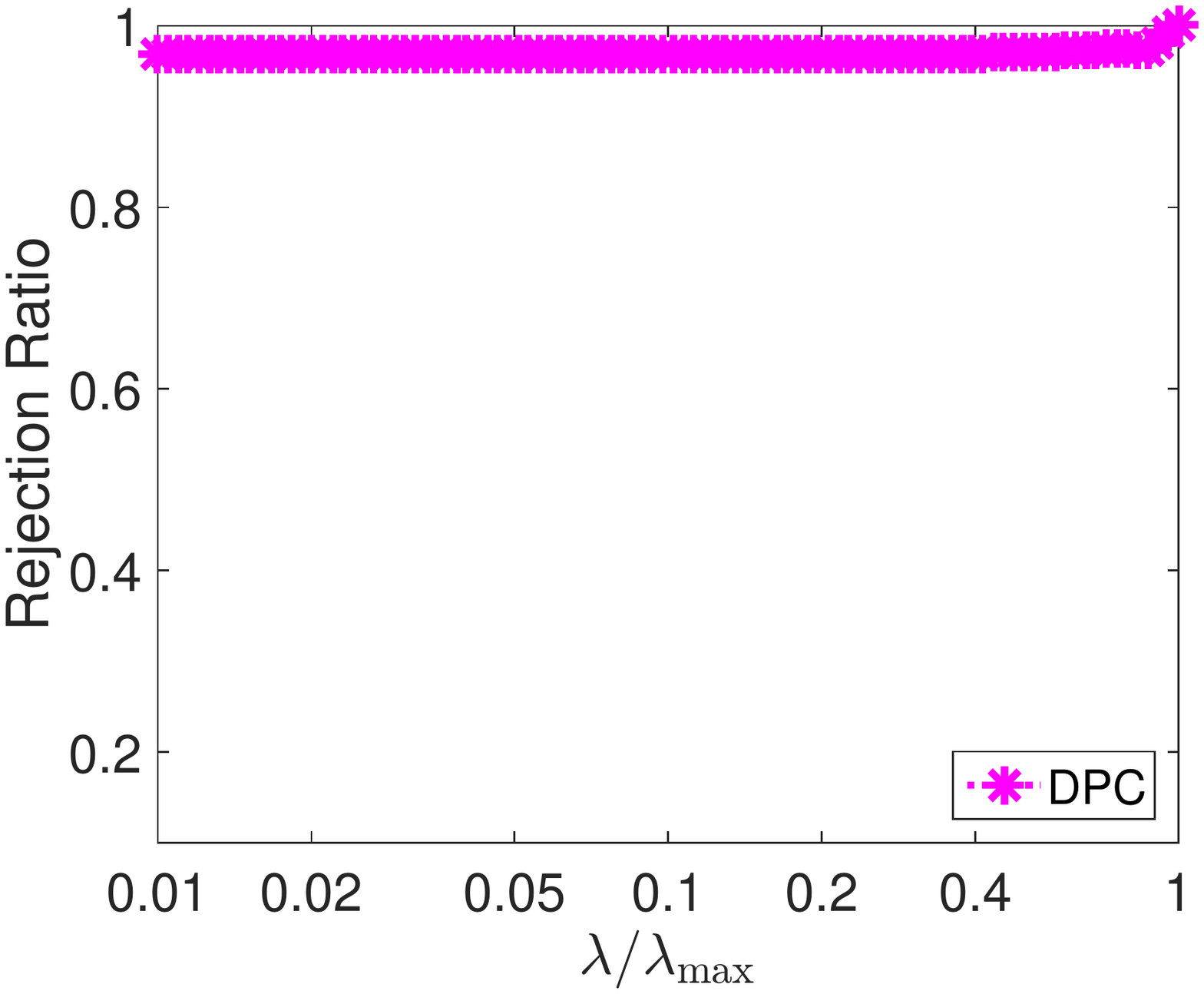}
		}
	}\vspace{-1mm}
	\caption{Rejection ratios of DPC on two synthetic data sets with different feature dimensions.}
	\label{fig:rej_ratio_synthetic}
	\vspace{-2mm}
\end{figure*}

\figref{fig:rej_ratio_synthetic} shows the rejection ratios of DPC on Synthetic 1 and Synthetic 2. For all the six settings, the rejection ratios of DPC are higher than $90\%$, even for small parameter values. This demonstrates one of the advantages of DPC, as previous empirical studies \citep{Ghaoui2012,Tibshirani2011,Wang-JMLR} indicate that the capability of screening rules in identifying inactive features usually decreases as the parameter value decreases. Moreover, \figref{fig:rej_ratio_synthetic} also shows that as the feature dimension increases, the rejection ratios of DPC become higher---that is very close to $1$. This implies that the potential capability of DPC in identifying the inactive features on high-dimensional data sets would be even more significant. 

Table \ref{table:DPC_runtime} presents the running time of the solver with and without \mbox{DPC}. The speedup is very significant, which is up to $60$ times. Take \mbox{Synthetic} 1 for example. When the feature dimension is $50000$, the solver without DPC takes about $40.68$ hours to solve problem (\ref{prob:MTFL}) at $100$ paramater values. In contrast, combined with \mbox{DPC}, the solver only takes less than one hour to solve the same $100$ problems---which leads to a speedup about $60$ times. Table \ref{table:DPC_runtime} also shows that the computational cost of DPC is very low---which is negligible compared to that of the solver without screening. Moreover, as the rejection ratios of DPC increases with feature dimension growth (see \figref{fig:rej_ratio_synthetic}), Table \ref{table:DPC_runtime} shows that the speedup by DPC increases as well.

\subsection{Experiments on Real Data Sets}

We perform experiments on three real data sets: 1) the \mbox{TDT2} text data set \citep{CWH09}; 2) the animal data set \citep{Lampert2009}; 3) the Alzheimer’s Disease Neuroimaging Initiative (ADNI) data set (\url{http://adni.loni.usc.edu/}).

\begin{figure*}[ht!]
	\centering{
		\subfigure[Animal, $d=15036$] { \label{fig:animal}
			\includegraphics[width=0.3\columnwidth]{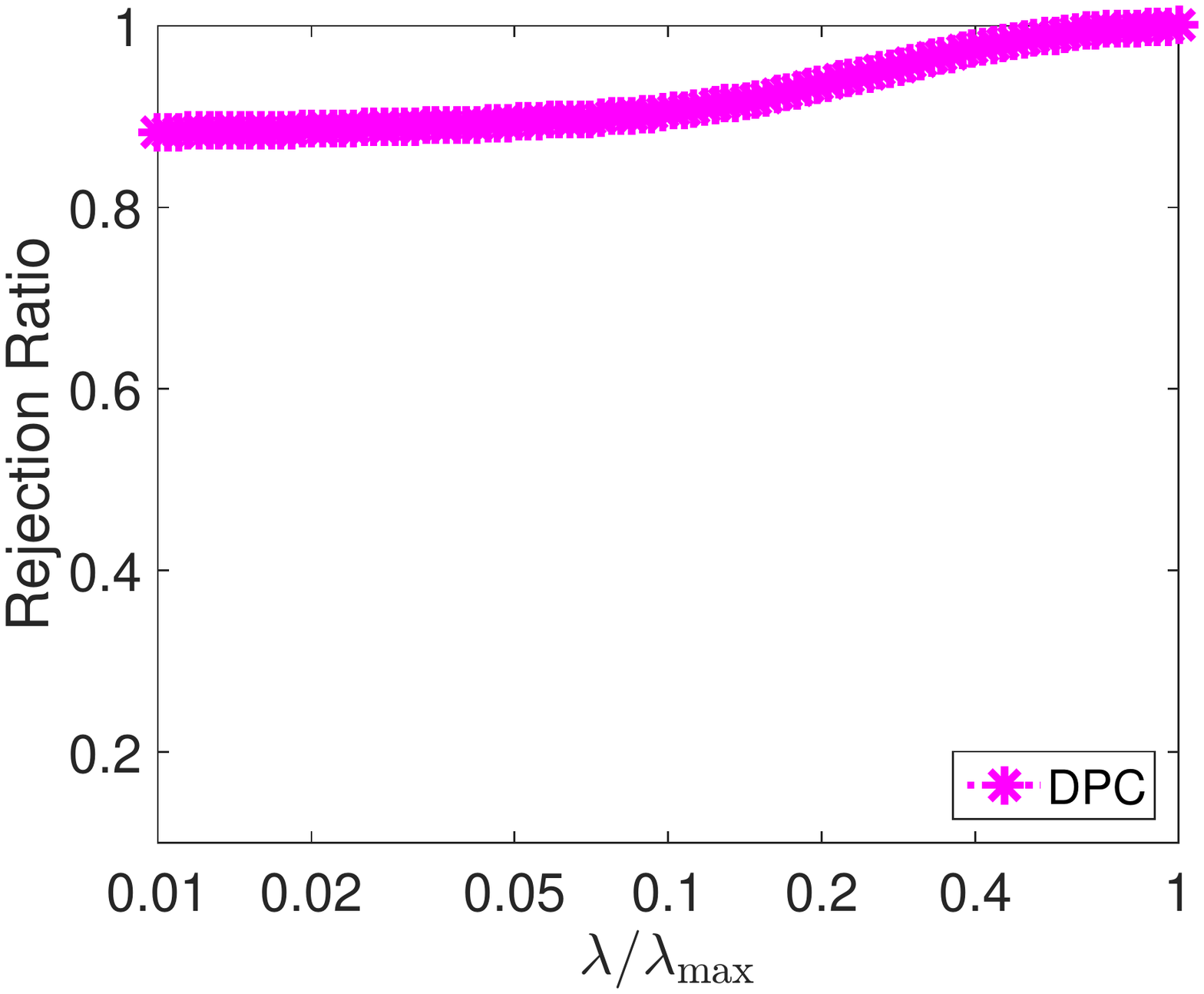}
		}
		\subfigure[TDT2, $d=24262$] { \label{fig:TDT}
			\includegraphics[width=0.3\columnwidth]{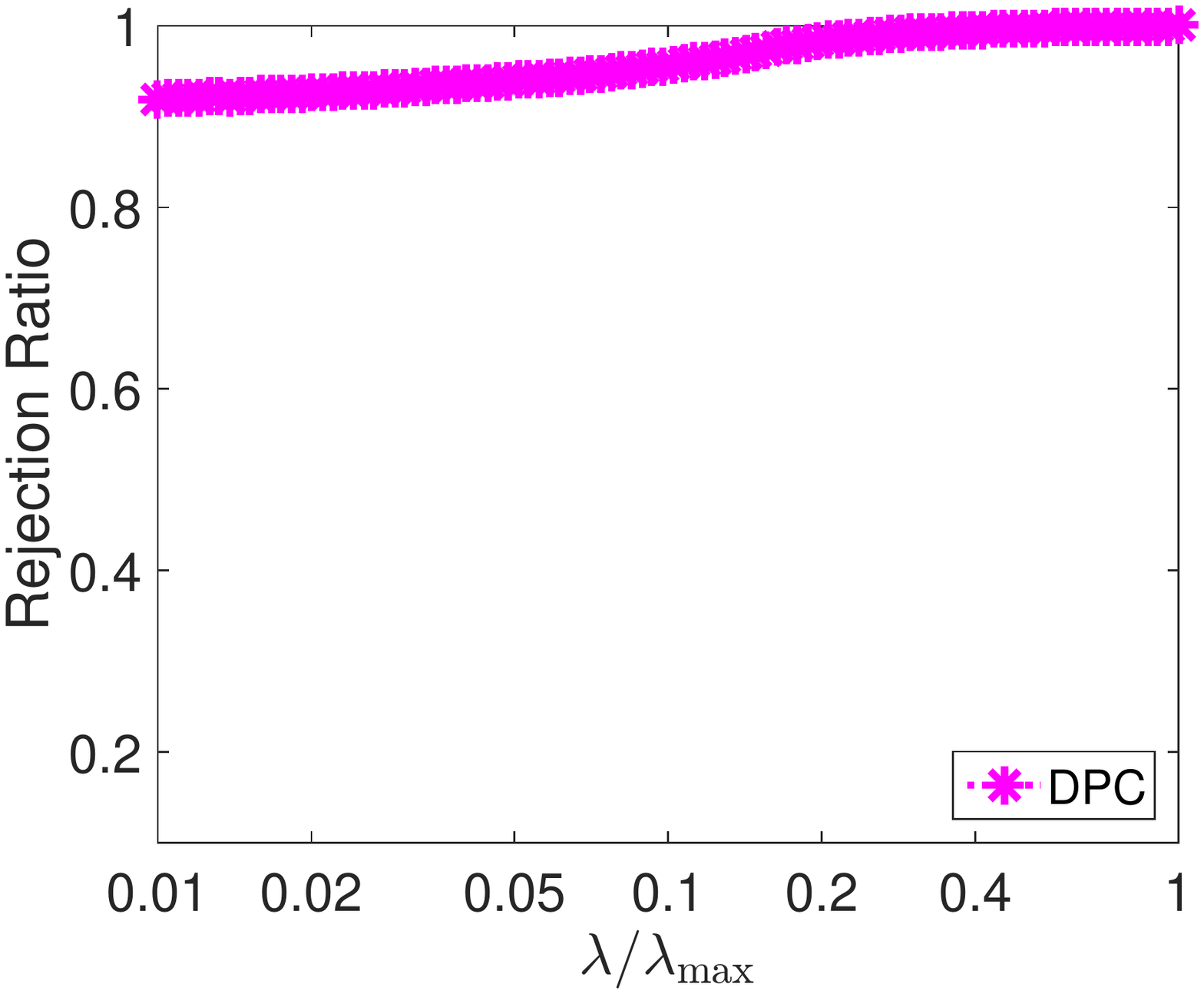}
		}
		\subfigure[ADNI, $d=504095$] { \label{fig:ADNI}
			\includegraphics[width=0.3\columnwidth]{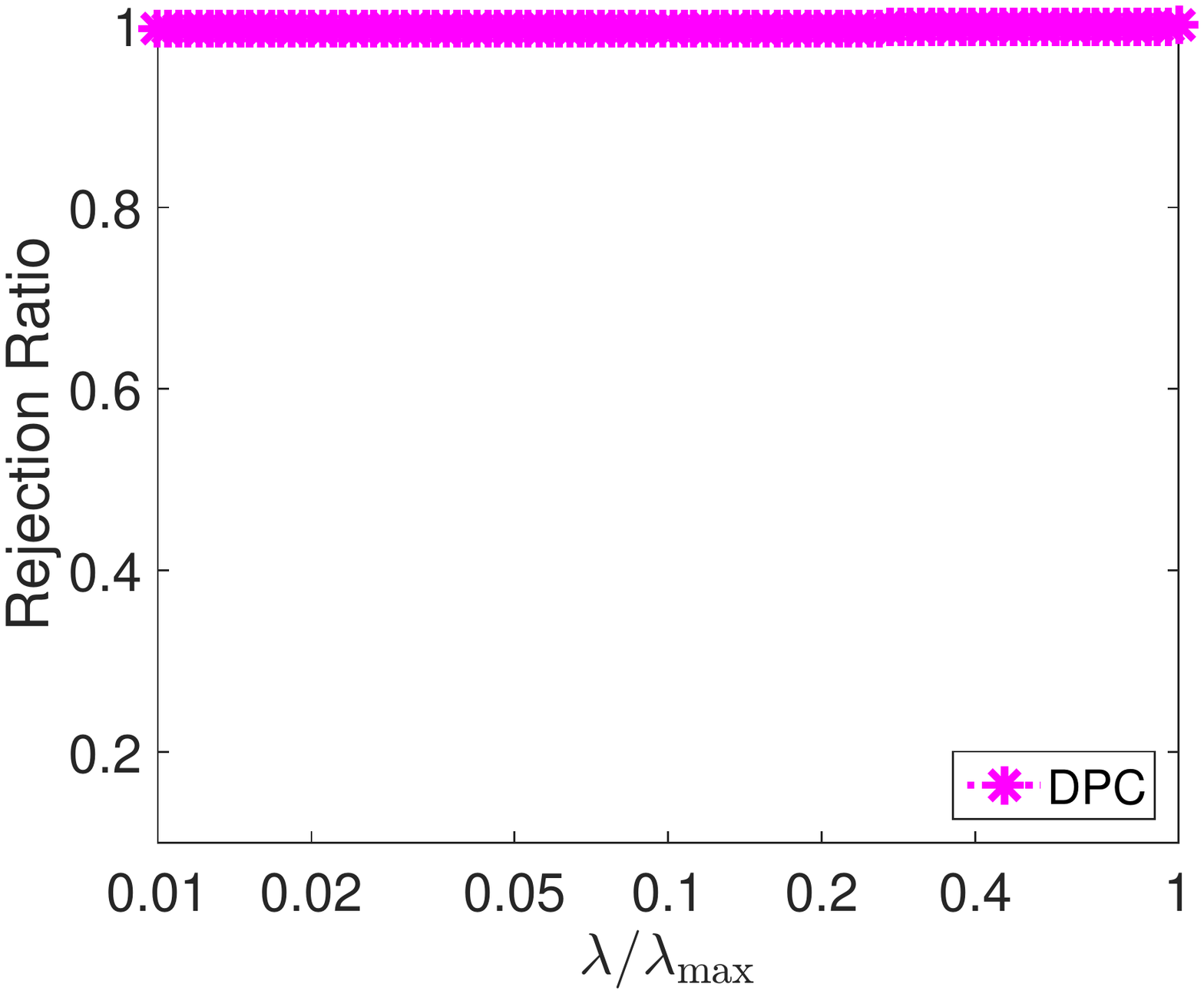}
		}
	}\vspace{-2mm}
	\caption{Rejection ratios of DPC on three real data sets.}
	\label{fig:rej_ratio_real}
	\vspace{-2mm}
\end{figure*}

\textbf{The Animal Data Set} The data set consists of $30475$ images of $50$ animals classes. By following the experiment settings in \citet{Kang2011}, we choose
$20$ animal classes in the data set: antelope,
grizzly-bear, killer-whale, beaver, Dalmatian, Persiancat,
horse, german- shepherd, blue-whale, Siamese-cat,
skunk, ox, tiger, hippopotamus, leopard, moose, spidermonkey,
humpback-whale, \mbox{elephant}, and gorilla. We construct $20$ tasks, where each of them is a classification task of one type of animal against all the others. For the $t^{th}$ task, we first \mbox{randomly} select $30$ samples from the $t^{th}$ class as the positive samples; and then we randomly select $30$ samples from all the other classes as the negative samples. We make use of all the seven sets of features kindly provided by \citet{Lampert2009}: color histogram features, local self-similarity features, PyramidHOG (PHOG) features, SIFT features, colorSIFT features, SURF features, and DECAF features. Thus, each image is represented by a $15036$-dimensional vectors. Hence, the data matrix $X_t$ of the $t^th$ task is of $60\times 15036$, where $t=1,\ldots,20$.

\textbf{The TDT2 Data Set} The original data set contains $9394$ documents of $30$ categories. Each document is represented by a $36771$-dimensional vector. Similar to the Animal data set, we construct $30$ tasks, each of which is a classification task of one category against all the others \citep{Amit2007}. Also, for the $t^{th}$ task, we first \mbox{randomly} select $50$ samples from the $t^{th}$ category as the positive samples, and then we randomly select $50$ samples from all the other categories as the negative samples. Moreover, we remove the features that have only zero entries, thus leaving us $24262$ features. Hence, the data matrix $X_t$ of the $t^th$ task is of $100\times 24262$, where $t=1,\ldots,30$. 

\setlength{\tabcolsep}{.18em}
\begin{table}[t]
	\vspace{-2mm}
	\begin{center}
		\caption{Running time (in minutes) for solving the MTFL model (\ref{prob:MTFL}) along a sequence of $100$ tuning parameter values of $\lambda$ \mbox{equally} spaced on the logarithmic scale of ${\lambda}/{\lambda_{\rm max}}$ from $1.0$ to $0.01$ by (a): the solver \citep{SLEP} without screening (see the third column); (b): the solver  with DPC (see the fifth column). 
		}\label{table:DPC_runtime}\vspace{2mm}
		\begin{footnotesize}
			\def\arraystretch{1.25}
			\begin{tabular}{ l c|c|c|c|c| }
				\cline{2-6}
				& \multicolumn{1}{|c|}{$d$} &  solver & DPC &  DPC+solver & \textbf{speedup} \\
				\cline{2-6}\\ [-2.5ex]\hline
				\multicolumn{1}{|r|}{\multirow{3}{*}{Synthetic 1}}  & \multicolumn{1}{|c|}{$10000$} & 405.75 & 0.7 & 28.12 & \textbf{14.43}  \\ \cline{2-6}
				\multicolumn{1}{|r|}{}  & \multicolumn{1}{|c|}{$20000$} & 913.70 & 1.36 & 37.02 & \textbf{24.68} \\\cline{2-6}
				\multicolumn{1}{|r|}{} & \multicolumn{1}{|c|}{$50000$} & 2441.57 & 3.50 & 42.08 & \textbf{58.03}  \\\hline\hline
				\multicolumn{1}{|r|}{\multirow{3}{*}{Synthetic 2}}  & \multicolumn{1}{|c|}{$10000$} & 406.85 & 0.70 & 29.28 & \textbf{13.89} \\\cline{2-6}
				\multicolumn{1}{|r|}{}  & \multicolumn{1}{|c|}{$20000$} & 906.09 & 1.37 & 36.66 & \textbf{24.72} \\\cline{2-6}
				\multicolumn{1}{|r|}{}  & \multicolumn{1}{|c|}{$50000$} & 2435.38 & 3.46 & 44.78 & \textbf{54.39}  \\\hline\hline
				\multicolumn{1}{|c|}{\multirow{1}{*}{Animal}} & 15036 & 311.71 & 0.47 & 16.36 & \textbf{19.05} \\\hline
				\multicolumn{1}{|c|}{\multirow{1}{*}{TDT2}} & 24262 & 958.66 & 1.87 & 44.11 & \textbf{21.74} \\\hline
				\multicolumn{1}{|c|}{\multirow{1}{*}{ADNI}} & 504095 & 9625.58 & 21.13 & 35.34 & \textbf{272.37} \\\hline
			\end{tabular}
		\end{footnotesize}
	\end{center}
	\vspace{-0.2in}	
\end{table}

\textbf{The ADNI Data Set} The data set consists of $747$ patients with $504095$ single nucleotide polymorphisms (SNPs), and the volume of $93$ brain regions for each patient. We first randomly select $20$ brain regions. Then, for each region, we randomly select $50$ patients, and utilize the corresponding SNPs data as the data matrix and the volumes of that brain region as the response. Thus, we have $20$ tasks, each of which is a regression task. The data matrix $X_t$ of the $t^{th}$ task is of $50\times 504095$, where $t=1,\ldots,20$. 

\figref{fig:rej_ratio_real} shows the rejection ratios of DPC---that are above $90\%$---on the aforementioned three real data sets. In particular, the rejection ratios of DPC on the ADNI data set are higher than $99\%$ at the $100$ parameter values. Table \ref{table:DPC_runtime} shows that the resulting speedup is very significant---that is up to $270$ times. We note that the feature dimension of the ADNI data set is more than \emph{half million}. Without screening, Table \ref{table:DPC_runtime}  shows that the solver takes about \emph{seven days} (approximately \emph{one week}) to compute the MTFL model (\ref{prob:MTFL}) at $100$ parameter values. However, integrated with the \mbox{DPC} screening rule, the solver computes the $100$ solutions in about \emph{half an hour}. The experiments again \mbox{indicate} that DPC provides better performance (in terms of rejection ratios and speedup) for higher dimensional data sets.
\vspace{2mm}
\section{Conclusion}\label{section:conclusion}

In this paper, we propose a novel screening method for the MTFL model in (\ref{prob:MTFL}), called DPC. The DPC screening rule is based on an indepth analysis of the geometric properties of the dual problem and the dual feasible set. To the best of our knowledge, DPC is the first screening rule that is applicable to sparse models with multiple data matrices. DPC is \emph{safe} in the sense that the identified features by DPC are guaranteed to have zero coefficients in the solution vectors across all tasks. Experiments on synthetic and real data sets demonstrate that DPC is very effective in identifying the inactive features, which leads to a substantial savings in computational cost and memory usage \emph{without sacrificing accuracy}. Moreover, DPC is more effective as the feature dimension increases, which makes DPC a very competitive candidate for the applications of very high-dimensional data. We plan to extend DPC to more general MTFL models, e.g., the MTFL models with multiple regularizers.

\clearpage
\newpage

\appendix

\section{Discussions regarding to the Dual Problem of (\ref{prob:MTFL})}

Although \eqref{eqn:MTFL_minf2_Fermat} implies that $\|\mathbf{m}^{\ell}\|\leq1$, this might not be the case. Thus, we need to consider the following two cases.
\begin{enumerate}
	\item If \eqref{eqn:MTFL_minf2_Fermat} holds, we can see that $\langle{\bf m}^{\ell},{\bf w}^{\ell}\rangle=\|{\bf w}^{\ell}\|$ and thus
	\begin{align}
	\min_{{\bf w}^{\ell}}\,f^{(\ell)}({\bf w}^{\ell})=0.
	\end{align}
	Therefore, we have
	\begin{align}
	\min_{W}\,f(W)=0.
	\end{align}
	\item If \eqref{eqn:MTFL_minf2_Fermat} does not hold, i.e., $\|{\bf m}^{\ell}\|>1$, we would have
	\begin{align}
	\inf_{{\bf w}^{\ell}}\,f^{(\ell)}({\bf w}^{\ell})=-\infty,
	\end{align}
	and thus
	\begin{align}
	\min_{W}\,f(W)=-\infty.
	\end{align}
	To see this, we define ${\bf w}^{(\ell)}(t)=t\frac{{\bf m}^{\ell}}{\|{\bf m}^{\ell}\|}$ and thus $$\langle{\bf m}^{\ell},{\bf w}^{(\ell)}(t)\rangle=t\|{\bf m}^{\ell}\|.$$ Then, we have
	\begin{align}
	f^{(\ell)}({\bf w}^{\ell}(t))=t(1-\|{\bf m}^{\ell}\|).
	\end{align}
	Because $\|{\bf m}^{\ell}\|>1$, the above equation yields
	\begin{align}
	\inf_{{\bf w}^{\ell}}\,f^{(\ell)}({\bf w}^{\ell})\leq\lim_{t\rightarrow\infty}\,f_2^{(\ell)}({\bf w}^{\ell}(t))=-\infty.
	\end{align}
\end{enumerate}
The above discussion implies that
\begin{align}\label{eqn:MTFL_minf2}
\min_{W}\,f(W)=
\begin{dcases}
0,\hspace{6mm}\mbox{if}\,\|{\bf m}^{\ell}\|\leq1,\,\ell=1,\ldots,d,\\
-\infty,\hspace{2mm}\mbox{otherwise}.
\end{dcases}
\end{align}

\section{Proof of Theorem \ref{thm:MTFL_primal_dual_closed_form}}

\begin{proof}
	For notational convenience, let
	\begin{enumerate}
		\item[\textup{1.}] $\dfrac{\bf y}{\lambda}\in\mathcal{F}$;
		\item[\textup{2.}] $\theta^*(\lambda)=\dfrac{\bf y}{\lambda}$;
		\item[\textup{3.}] $W^*(\lambda)=0$;
		\item[\textup{4.}] $\lambda\geq\lambda_{\rm max}$.
	\end{enumerate}
	
	\eqref{eqn:theta*_proj} implies that 1 is equivalent to 2.
	
	($2\Leftrightarrow3$) Suppose that 2 holds. \eqref{eqn:MTFL_KKT1} implies that $X_t{\bf w}^*_t(\lambda)=0$ for $t=1,\ldots, T$. Denote the objective function of the MTFL model (\ref{prob:MTFL}) by $f(W)$. We claim that $W^*(\lambda)$ must be zero. To see this, let $\overline{W}^*(\lambda)\neq0$ be another optimal solution of (\ref{prob:MTFL}) and thus $X_t\bar{\bf w}^*_t(\lambda)=0$ for $t=1,\ldots,T$. However, it is evident that $f(W^*(\lambda))<f(\overline{W}^*(\lambda))$. This leads to a contradiction. Thus, the optimal solution $W^*(\lambda)$ is zero and we have proved $2\Rightarrow3$. The converse direction, i.e., $2\Leftarrow3$ is a direct consequence of \eqref{eqn:MTFL_KKT1}.
	
	($1\Leftrightarrow4$) It is evident that 1 holds if and only if ${\bf y}/\lambda$ is a feasible solution of problem (\ref{prob:MTFL_dual}), namely, all constraints in (\ref{prob:MTFL_dual}) holds at ${\bf y}/\lambda$. By plugging ${\bf y}/\lambda$ into the constraints in (\ref{prob:MTFL_dual}), we can see that the feasibility of ${\bf y}/\lambda$ is equivalent to 4. Thus, we can see that 1 is equivalent to 4. This completes the proof.
\end{proof}

\section{Proof of Corollary \ref{corollary:convex_set_projection}}

\begin{proof}
	\begin{enumerate}
		\item[\textup{1.}] To show part 1, we only need to set ${\bf u}_1={\bf u}$ and ${\bf u}_2=0$, and then plug them into the inequality (\ref{ineqn:nonexpansive}) [note that $\textup{P}_{\mathcal{C}}(0)=0$ since $0\in\mathcal{C}$]. 
		\item[\textup{2.}] Part 1 implies that $\|\textup{P}_{\mathcal{C}}({\bf u})\|\leq\|{\bf u}\|$. Thus, we have
		\begin{align*}
		\|{\bf u}\|^2\geq\|{\bf u}\| \|\textup{P}_{\mathcal{C}}({\bf u})\|\geq\langle{\bf u},\textup{P}_{\mathcal{C}}({\bf u})\rangle,
		\end{align*}
		which is equivalent to the statement in part 2.
	\end{enumerate}
	The proof is completed.
\end{proof}

\section{Proof of Theorem \ref{thm:MTFL_estimation}}

We first cite some useful properties of the projection operators.

\begin{lemma}\label{lemma:normal_cone_projection}
	\textup{\citep{Ruszczynski2006,Bauschke2011}} Let $\mathcal{C}$ be a nonempty closed convex set of a Hilbert space and $\mathbf{u}\in\mathcal{C}$. Then
	\begin{enumerate}
		\item[\textup{1.}]
		$N_{\mathcal{C}}(\mathbf{u})=\{\mathbf{v}:\mathbf{P}_{\mathcal{C}}(\mathbf{u}+\mathbf{v})=\mathbf{u}\}$.
		\item[\textup{2.}]  $\mathbf{P}_{\mathcal{C}}(\mathbf{u}+\mathbf{v})=\mathbf{u},\,\forall\,\mathbf{v}\in N_{\mathcal{C}}(\mathbf{u})$.
		\item[\textup{3.}]  Let $\overline{\mathbf{u}}\notin\mathcal{C}$ and $\mathbf{u}=\mathbf{P}_{\mathcal{C}}(\overline{\mathbf{u}})$. Then, $\mathbf{P}_{\mathcal{C}}(\mathbf{u}+t(\overline{\mathbf{u}}-\mathbf{u}))=\mathbf{u}$ for all $t\geq0$.
	\end{enumerate}
\end{lemma}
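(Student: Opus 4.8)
The plan is to derive all three statements from Theorem \ref{thm:normal_cone}---the characterization $\mathbf{u}=\mathbf{P}_{\mathcal{C}}(\bar{\mathbf{u}})\Leftrightarrow\bar{\mathbf{u}}-\mathbf{u}\in N_{\mathcal{C}}(\mathbf{u})$---together with the elementary observation that $N_{\mathcal{C}}(\mathbf{u})$ is a cone. Since these are purely first-order geometric facts about projections, no appeal to firmly nonexpansiveness (Theorem \ref{thm:firmly_nonexpansive}) is needed; the lemma is in essence a repackaging of Theorem \ref{thm:normal_cone}.

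For Part 1, I would fix $\mathbf{u}\in\mathcal{C}$ and apply Theorem \ref{thm:normal_cone} with the substitution $\bar{\mathbf{u}}=\mathbf{u}+\mathbf{v}$. The equivalence then reads $\mathbf{P}_{\mathcal{C}}(\mathbf{u}+\mathbf{v})=\mathbf{u}\Leftrightarrow(\mathbf{u}+\mathbf{v})-\mathbf{u}\in N_{\mathcal{C}}(\mathbf{u})$, i.e. $\mathbf{P}_{\mathcal{C}}(\mathbf{u}+\mathbf{v})=\mathbf{u}\Leftrightarrow\mathbf{v}\in N_{\mathcal{C}}(\mathbf{u})$. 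Reading this as an identity of sets in the free variable $\mathbf{v}$ yields exactly $N_{\mathcal{C}}(\mathbf{u})=\{\mathbf{v}:\mathbf{P}_{\mathcal{C}}(\mathbf{u}+\mathbf{v})=\mathbf{u}\}$.

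Part 2 is then merely the forward inclusion of Part 1: given $\mathbf{v}\in N_{\mathcal{C}}(\mathbf{u})$, the set identity places $\mathbf{v}$ in $\{\mathbf{v}:\mathbf{P}_{\mathcal{C}}(\mathbf{u}+\mathbf{v})=\mathbf{u}\}$, whence $\mathbf{P}_{\mathcal{C}}(\mathbf{u}+\mathbf{v})=\mathbf{u}$. For Part 3, I would first note that $\mathbf{u}=\mathbf{P}_{\mathcal{C}}(\bar{\mathbf{u}})$ combined with Theorem \ref{thm:normal_cone} gives $\bar{\mathbf{u}}-\mathbf{u}\in N_{\mathcal{C}}(\mathbf{u})$. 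The defining inequality $\langle\mathbf{v},\mathbf{u}'-\mathbf{u}\rangle\leq0$ for all $\mathbf{u}'\in\mathcal{C}$ is preserved under multiplication by $t\geq0$ by linearity of the inner product in its first argument, so $N_{\mathcal{C}}(\mathbf{u})$ is closed under nonnegative scaling; hence $t(\bar{\mathbf{u}}-\mathbf{u})\in N_{\mathcal{C}}(\mathbf{u})$ for every $t\geq0$. Applying Part 2 with $\mathbf{v}=t(\bar{\mathbf{u}}-\mathbf{u})$ then delivers $\mathbf{P}_{\mathcal{C}}(\mathbf{u}+t(\bar{\mathbf{u}}-\mathbf{u}))=\mathbf{u}$.

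I expect no substantive obstacle here. The only line requiring explicit verification is the cone property of $N_{\mathcal{C}}(\mathbf{u})$ invoked in Part 3, which is immediate from the definition. The one subtlety worth flagging is that Part 3 states the hypothesis $\bar{\mathbf{u}}\notin\mathcal{C}$, but this plays no essential role in the argument above---it only ensures $\bar{\mathbf{u}}\neq\mathbf{u}$ so that the ray $\{\mathbf{u}+t(\bar{\mathbf{u}}-\mathbf{u}):t\geq0\}$ is nondegenerate; the conclusion follows verbatim from the cone property in either case.
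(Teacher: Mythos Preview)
Your argument is correct. Note, however, that the paper does not actually prove this lemma: it is stated as a cited result from \citet{Ruszczynski2006} and \citet{Bauschke2011}, so there is no ``paper's own proof'' to compare against. Your derivation---reducing everything to Theorem~\ref{thm:normal_cone} plus the cone property of $N_{\mathcal{C}}(\mathbf{u})$---is exactly the standard route one finds in those references, and your observation that the hypothesis $\bar{\mathbf{u}}\notin\mathcal{C}$ in Part~3 is inessential is also accurate.
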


We are now ready to prove Theorem \ref{thm:MTFL_estimation}

\begin{proof}~
	\begin{enumerate}
		\item For $\lambda\in(0,\lambda_{\rm max})$, Theorem \ref{thm:MTFL_primal_dual_closed_form} implies that ${\bf y}/\lambda\notin\mathcal{F}$. Thus, the statement holds for $\lambda\in(0,\lambda_{\rm max})$ by Theorem \ref{thm:normal_cone} and \eqref{eqn:theta*_proj} [let $\bar{\bf u}={\bf y}/\lambda$ and ${\bf u}=\theta^*(\lambda)$]. 
		
		To show the statement holds at $\lambda_{\rm max}$, Theorem \ref{thm:normal_cone} indicates that we need to show
		\begin{align}\label{ineqn:gradient_g*_normal_cone}
		\left\langle\nabla g_{\ell_*}\left(\frac{\bf y}{\lambda_{\rm max}}\right),\theta-\frac{\bf y}{\lambda_{\rm max}}\right\rangle\leq0,\,\forall \theta\in\mathcal{F}.
		\end{align}
		Because $g_{\ell_*}(\cdot)$ is convex, we have \citep{Ruszczynski2006}
		\begin{align}\label{ineqn:g*_convex}
		\hspace{-3mm}g_{\ell_*}(\theta)-g_{\ell_*}\left(\frac{{\bf y}}{\lambda_{\rm max}}\right)\geq\left\langle\nabla g_{\ell_*}\left(\frac{\bf y}{\lambda_{\rm max}}\right),\theta-\frac{\bf y}{\lambda_{\rm max}}\right\rangle.
		\end{align}
		Note that, $g_{\ell}$ is the constraint function of the dual problem in (\ref{prob:MTFL_dual}). Thus, for any dual feasible solution $\theta\in\mathcal{F}$, it is evident that $g_{\ell_*}(\theta)\leq1$. Moreover, \eqref{eqn:MTFL_lambdamx} implies that $g_{\ell_*}({\bf y}/\lambda_{\rm max})=1$. Therefore, the left hand of the inequality (\ref{ineqn:g*_convex}) must be non-positive, which yields inequality (\ref{ineqn:gradient_g*_normal_cone}). Thus, the statement holds.
		\item A direct application of part 2 of Corollary \ref{corollary:convex_set_projection} yields 
		\begin{align*}
		\left\langle\frac{\mathbf{y}}{\lambda_0},\mathbf{n}(\lambda_0)\right\rangle=\left\langle\frac{\mathbf{y}}{\lambda_0},\frac{\mathbf{y}}{\lambda_0}-\theta^*(\lambda_0)\right\rangle\geq0,\,\forall\,\lambda_0\in(0,\lambda_{\rm max}).
		\end{align*}
		When $\lambda_0=\lambda_{\rm max}$, by noting that $\mathbf{n}(\lambda_{\rm max})=\nabla g_{\ell_*}(\tfrac{\mathbf{y}}{\lambda_{\rm max}})$, we have
		\begin{align*}
		\left\langle\frac{\mathbf{y}}{\lambda_{\rm max}},\mathbf{n}(\lambda_{\rm max})\right\rangle=\sum_{t=1}^T2\left\langle\mathbf{x}_{\ell_*}^{(t)},\frac{\mathbf{y}}{\lambda_{\rm max}}\right\rangle^2\geq0.
		\end{align*}
		Thus, the statement holds.
		\item By \eqref{eqn:MTFL_r}, we have
		\begin{align}\label{eqn:ip_rn}
		\langle\textbf{\textup{r}}(\lambda,\lambda_0),\textbf{\textup{n}}(\lambda_0)\rangle=\left(\frac{1}{\lambda}-\frac{1}{\lambda_0}\right)\langle{\bf y},\textbf{\textup{n}}(\lambda_0)\rangle+\left\langle\frac{\bf y}{\lambda_0}-\theta^*(\lambda_0),\textbf{\textup{n}}(\lambda_0)\right\rangle.
		\end{align}
		By \eqsref{eqn:MTFL_n} and (\ref{eqn:theta*_proj}), the second term on the right hand side of \eqref{eqn:ip_rn} is nonnegative for all $\lambda_0\in(0,\lambda_{\rm max}]$.
		
		The fact that $0\in\mathcal{F}$ yields 
		\begin{align*}
		\left\langle0-\frac{\bf y}{\lambda_{\rm max}},\textbf{\textup{n}}(\lambda_{\rm max})\right\rangle\leq0.
		\end{align*} 
		Thus, the first term on the right hand side of \eqref{eqn:ip_rn} is nonnegative for $\lambda_0=\lambda_{\rm max}$. For $\lambda_0\in(0,\lambda_{\rm max})$, part 2 of Corollary \ref{corollary:convex_set_projection}, \eqsref{eqn:theta*_proj} and (\ref{eqn:MTFL_n}) imply that
		\begin{align*}
		\left\langle\frac{\bf y}{\lambda_0},\frac{\bf y}{\lambda_0}-\textup{P}_{\mathcal{F}}\left(\frac{\bf y}{\lambda_0}\right)\right\rangle=\left\langle\frac{\bf y}{\lambda_0},\textbf{\textup{n}}(\lambda_0)\right\rangle\geq0.
		\end{align*}
		Thus, the first term on the right hand side of \eqref{eqn:ip_rn} is nonnegative for $\lambda_0\in(0,\lambda_{\rm max})$. 
		
		As a result, the inner product $\langle\textbf{\textup{r}}(\lambda,\lambda_0),\textbf{\textup{n}}(\lambda_0)\rangle$ is nonnegative.
		\item We define
		\begin{align}
		\theta(t)=\theta^*(\lambda_0)+t\textbf{\textup{n}}(\lambda_0).
		\end{align}
		Part 1 of Lemma \ref{lemma:normal_cone_projection} implies that
		\begin{align}\label{eqn:projection_ray}
		\textup{P}_{\mathcal{F}}(\theta(t))=\theta^*(\lambda_0),\,\forall\,t\geq0.
		\end{align} 
		The nonexpansiveness of the projection operators yields [let ${\bf u}_1={\bf y}/\lambda$ and ${\bf u}_2=\theta(t)$ and plug them into (\ref{ineqn:nonexpansive})]
		\begin{align*}
		\left\|\textup{P}_{\mathcal{F}}\left(\frac{\bf y}{\lambda}\right)-\textup{P}_{\mathcal{F}}(\theta(t))\right\|^2+\|(\textup{P}_{\mathcal{F}}-\textup{Id})\left(\frac{\bf y}{\lambda}\right)-(\textup{P}_{\mathcal{F}}-\textup{Id})(\theta(t))\|^2\leq\left\|\frac{\bf y}{\lambda}-\theta(t)\right\|^2,\,\forall\,t\geq0.
		\end{align*}
		By \eqsref{eqn:theta*_proj}, (\ref{eqn:projection_ray}) and (\ref{eqn:MTFL_r}), the above inequality reduces to
		\begin{align}\label{ineqn:MTFL_estimation1}
		\|\theta^*(\lambda)-\theta^*(\lambda_0)\|^2+\|\theta^*(\lambda)-\theta^*(\lambda_0)-(\textbf{\textup{r}}(\lambda,\lambda_0)-t\textbf{\textup{n}}(\lambda_0))\|^2\leq\|\textbf{\textup{r}}(\lambda,\lambda_0)-t\textbf{\textup{n}}(\lambda_0)\|^2,\,\forall\,t\geq0.
		\end{align}
		Let us consider
		\begin{align}\label{prob:radius_ball}
		\min_{t\geq0}\,r(t)=\|\textbf{\textup{r}}(\lambda,\lambda_0)-t\textbf{\textup{n}}(\lambda_0)\|^2.
		\end{align}
		Because $r(t)$ is a quadratic function of $t$, we can see that
		\begin{align*}
		\min_{t\geq0}\,r(t)=
		\begin{dcases}
		\|\textbf{\textup{r}}(\lambda,\lambda_0)\|^2,\hspace{4mm}\mbox{if}\,\langle\textbf{\textup{r}}(\lambda,\lambda_0),\textbf{\textup{n}}(\lambda_0)\rangle<0,\\
		\|\textbf{\textup{r}}^{\perp}(\lambda,\lambda_0)\|^2,\hspace{2mm}\mbox{if}\,\langle\textbf{\textup{r}}(\lambda,\lambda_0),\textbf{\textup{n}}(\lambda_0)\rangle\geq0.
		\end{dcases}
		\end{align*}

		Because of part 3, we have
		\begin{align}\label{eqn:MTFL_min_rt}
		\min_{t\geq0}\,r(t)&=\|\textbf{\textup{r}}^{\perp}(\lambda,\lambda_0)\|^2\\\label{eqn:MTFL_argmin_rt}
		\argmin_{t\geq0}\,r(t)&=\frac{\langle\textbf{\textup{r}}(\lambda,\lambda_0),\textbf{\textup{n}}(\lambda_0)\rangle}{\|\textbf{\textup{n}}(\lambda_0)\|^2}
		\end{align}
		Plugging \eqsref{eqn:MTFL_min_rt} and (\ref{eqn:MTFL_argmin_rt}) into (\ref{ineqn:MTFL_estimation1}) yields the statement, which completes the proof.
	\end{enumerate}
	The proof is complete.
\end{proof}

\clearpage
\newpage
{
	\bibliographystyle{plainnat}
	\bibliography{refs}
}

\end{document}